\documentclass[DIV=calc,10pt,twocolumn]{article} 
\usepackage{simpleConference}
\usepackage{times}
\usepackage{graphicx,float}
\usepackage{amssymb}
\usepackage{url}
\usepackage{caption,lipsum}
\usepackage{abstract}

\usepackage{amsmath}
\usepackage{amsthm}
\usepackage{booktabs} 
\usepackage{wrapfig}
\usepackage{tabularx}




\newtheorem{theorem}{Theorem}

\newtheorem{lemma}{Lemma}

\newtheorem{proposition}{Proposition}

\newcommand\question[1]

\newcommand{\abs}[1]{\left\vert#1\right\vert}

\newcommand{\set}[1]{\left\{#1\right\}}
\newcommand{\parr}[1]{\left (#1\right )}

\newcommand{\Real}{\mathbb R}

\newcommand{\ones}{\mathbf{1}}

\newcommand{\E}{\mathcal  E}
\newcommand{\R}{\mathcal  R}

\newcommand{\too}{\rightarrow}

\newcommand{\OO}{\mathcal{O}}


\newcommand{\area}{\textrm{area}}


\newcommand{\eg}{{\it e.g.}}
\newcommand{\ie}{{\it i.e.}}

\begin{document}

\title{Point Convolutional Neural Networks by Extension Operators}

\author{Matan Atzmon\thanks{equal contribution} \and Haggai Maron\footnotemark[1] \and Yaron Lipman\\}
\affiliation{Weizmann Institute of Science}



\twocolumn[{%
	\renewcommand\twocolumn[1][]{#1}%
	\maketitle
	\begin{center}
		\vspace{-0.9cm}
		\includegraphics[width=7.0in]{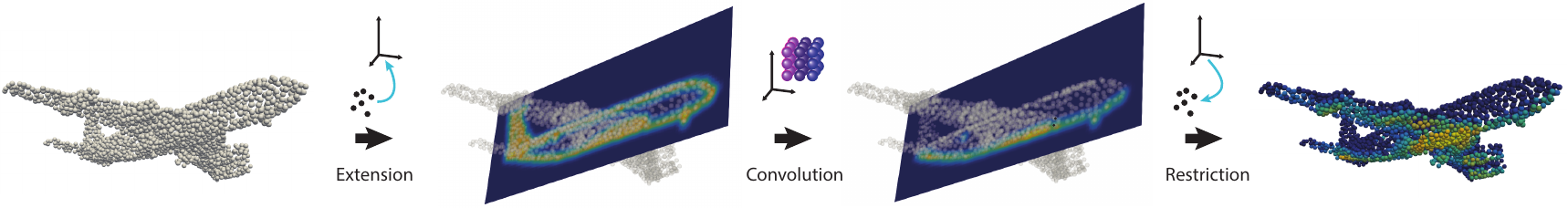}	\captionof{figure}{A new framework for applying convolution to functions defined over point clouds: First, a function over the point cloud (in this case the constant one) is \emph{extended} to a continuous volumetric function over the ambient space; second, a continuous volumetric \emph{convolution} is applied to this function (without any discretization or approximation); and lastly, the result is \emph{restricted} back to the point cloud.  }\label{fig:teaser}
	\end{center}%
}]
\saythanks

\subsection*{\centering Abstract} 

\textit{This paper presents Point Convolutional Neural Networks (PCNN): a novel framework for applying convolutional neural networks to point clouds. The framework consists of two operators: extension and restriction, mapping point cloud functions to volumetric functions and vise-versa. A point cloud convolution is defined by pull-back of the Euclidean volumetric convolution via an extension-restriction mechanism.}

\textit{The point cloud convolution is computationally efficient, invariant to the order of points in the point cloud, robust to different samplings and varying densities, and translation invariant, that is the same convolution kernel is used at all points. PCNN generalizes image CNNs and allows readily adapting their architectures to the point cloud setting.}

\textit{Evaluation of PCNN on three central point cloud learning benchmarks 	convincingly outperform competing point cloud learning methods, and the vast majority of methods working with more informative shape representations such as surfaces and/or normals.}


\maketitle

\vspace{-0.3cm}
\section{Introduction}

The huge success of deep learning in image analysis motivates researchers to generalize deep learning techniques to work on 3D shapes. Differently from images, 3D data has several popular representation, most notably surface meshes and points clouds. Surface-based methods exploit connectivity information for 3D deep learning based on rendering  \cite{su2015multi}, local and global parameterization \cite{masci2015geodesic,sinha2016deep,maron2017convolutional}, or spectral properties \cite{yi2016syncspeccnn}. Point cloud methods rely mostly on points' locations in three-dimensional space and need to implicitly infer how the points are connected to form the underlying shape.

The goal of this paper is to introduce Point Cloud Convolutional Neural Networks (PCNN) generalizing deep learning techniques, and in particular Convolutional Neural Networks (CNN) \cite{krizhevsky2012imagenet}, to point clouds. As a point cloud $X\subset \Real^3$ is merely an approximation to some underlying shape $S$, the main challenges in building point cloud networks are to achieve: (i) Invariance to the order of points supplied in $X$; (ii) Robustness to sampling density and distribution of $X$ in $S$; and (iii) Translation invariance of the convolution operator (\ie, same convolution kernel is used at all points) .

Invariance to point order in $X$ was previously tackled in \cite{qi2016pointnet,ravanbakhsh2016deep,qi2017pointnet++, zaheer2017deep} by designing networks that are composition of euuquivariant layers (\ie, commute with permutations) and a final symmetric layer (\ie, invariant to permutations). As shown in \cite{ravanbakhsh2016deep}, any linear equivariant layer is a combination of scaled identity and constant linear operator and therefore missing many of the degrees of freedom existing in standard linear layers such as fully connected and even convolutional.

Volumetric grid methods \cite{wu20153d,maturana2015voxnet,qi2016volumetric,riegler2016octnet} use 3D occupancy grid to deal with the point order in $X$ and provide translation invariance of the convolution operator. However, they quantize the point cloud to a 3D grid, usually producing a crude approximation to the underlying shape (\ie, piecewise constant on voxels) and are confined to a fixed 3D grid structure.

Our approach toward these challenges is to define CNN on a point cloud $X$ using a pair of operators we call \emph{extension} $\E_X$ and \emph{restriction} $\R_X$. The extension operator maps functions defined over the point cloud $X$ to volumetric functions (\ie, functions defined over the entire ambient space $\Real^3$), where the restriction operator does the inverse action. Using $\E_X, \R_X$ we can translate operators such as Euclidean volumetric convolution to point clouds, see Figure \ref{fig:teaser}. In a nutshell, if $O$ is an operator on volumetric functions then its restriction to the point cloud $X$ would be
\begin{equation}\label{e:key}
	O_X = \R_X \circ O \circ \E_X.
\end{equation}
We take $\E_X$ to be a Radial Basis Function (RBF) approximation operator,
and $\R_X$ to be a sampling operator, \ie, sample a volumetric function at the points in $X$. As $O$ we take continuous volumetric convolution operators with general kernels $\kappa$ represented in the RBF basis as-well. In turn \eqref{e:key} is calculated using a sparse linear tensor combining the learnable kernel weights $k$, function values over the point cloud $X$, and a tensor connecting the two, defined directly from the point cloud $X$.

Since our choice of $\E_X$ is invariant to point order in $X$, and $\R_X$ is an equivariant operator (w.r.t.~$X$) we get that $O_X$ in \eqref{e:key} is equivariant. This construction leads to new equivariant layers, in particular convolutions, with more degrees of freedom compared to \cite{ravanbakhsh2016deep,qi2016pointnet, zaheer2017deep}.
The second challenge of robustness to sampling density and distribution is addressed by the \emph{approximation power} of the extension operator $\E_X$. Given a continuous function defined over a smooth surface, $f:S\too\Real$, we show that the extension of its restriction to $X$ approximates the restriction of $f$ to $S$, namely $$\E_X\circ \R_X[f]\approx f\Big\vert_{S}.$$ This means that two different samplings $X,X'\subset S$ of the same surface function are extended to the \emph{same volumetric function}, up to an approximation error. In particular, we show that extending the simplest, constant one function over the point cloud, $\E_X[\ones]$, approximates the indicator function of the surface $S$, while the gradient, $\nabla \E_X[\ones]$, approximates the mean curvature normal field over the surface. Then, the translation invariance and robustness of our convolution operator naturally follows from the fact that the volumetric convolution is translation invariant and the extension operator is robust.

PCNN provides a flexible framework for adapting standard image-based CNNs to the point cloud setting, while maintaining data only over the point cloud on the one hand, and learning convolution kernels robust to sampling on the other. We have tested our PCNN framework on standard classification, segmentation and normal estimation datasets where PCNN outperformed all other point cloud methods and the vast majority of other methods that use more informative shape representations such as surface connectivity.

\section{Previous Work}

We review different aspects of geometric deep learning with a focus on the point cloud setting. For a more comprehensive survey on geometric deep learning we refer the reader to \cite{bronstein2017geometric}.

\paragraph{Deep learning on point clouds.}
PointNet \cite{qi2016pointnet} pioneered deep learning for point clouds with a Siamese, per-point network composed with a symmetric max operator that guarantees invariance to the points' order. PointNet was proven to be a universal approximator (\ie, can approximate arbitrary continuous functions over point clouds). A follow up work \cite{qi2017pointnet++} suggests a hierarchical application of the PointNet model to different subsets of the point cloud; this allows capturing structure at different resolutions when applied with a suitable aggregation mechanism. In \cite{guerrero2017pcpnet} the PointNet model is used to predict local shape properties from point clouds. In a related work \cite{ravanbakhsh2016deep,zaheer2017deep} suggest to approximate set function, with equivariant layers composed with a symmetric function such as max.
Most related to our work is the recent work of \cite{klokov2017escape} that suggested to generalize convolutional networks to point clouds by defining convolutions directly on kd-trees built out of the point clouds \cite{bentley1975multidimensional}, and \cite{schutt2017moleculenet} that suggested a convolutional architecture for modeling quantum interactions in molecules represented as point clouds, where convolutions are defined by multiplication with continuous filters. The main difference to our work is that we define the convolution of a point cloud function using an exact volumetric convolution with an extended version of the function. The approximation properties of the extended function facilitate a robust convolution on point clouds.

\paragraph{Volumetric methods.}
Another strategy is to generate a tensor volumetric representation of the shape restricted to a regular grid (\eg, by using occupancy indicators, or a distance function) \cite{wu20153d,maturana2015voxnet,qi2016volumetric}. The main limitation of these methods is the approximation quality of the underlying shape due to the low resolution enforced by the three dimensional grid structure. To overcome this limitation a few methods suggested to use sparse three dimensional data structures such as octrees \cite{wang2017cnn,riegler2016octnet}.
Our work can be seen as a generalization of these volumetric methods in that it allows replacing the grid cell's indicator functions as the basis for the network's functions and convolution kernels with more general basis functions (\eg, radial basis functions).

\paragraph{Deep learning on Graphs.}
Shapes can be represented as graphs, namely points with neighboring relations. In spectral deep learning the convolution is being replaced by a diagonal operator in the graph-Laplacian eigenbasis \cite{bruna2013spectral, defferrard2016convolutional, henaff2015deep}. The main limitation of these methods in the context of geometric deep learning is that different graphs have different spectral bases and finding correspondences between the bases or common bases is challenging. This problem was recently targeted by \cite{yi2016syncspeccnn} using the functional map framework.

\paragraph{Deep learning on surfaces.}
Other approaches to geometric deep learning work with triangular meshes that posses also connectivity and normal information, in addition to the point locations. One class of methods use rendering and 2D projections to reduce the problem to the image setting \cite{su2015multi,kalogerakis2016multiviewsegment}.
Another line of works uses local surface representations  \cite{masci2015geodesic,boscaini2016learning,monti2016geometric} or global parameterizations of surfaces \cite{sinha2016deep,maron2017convolutional} for reducing functions on surfaces to the planar domain or for defining convolution operators directly over the surfaces.

\paragraph{RBF networks.} RBF networks are a type of neural networks that use RBF functions as an activation layer, see \cite{orr1996introduction,orr1999recent}. This model was first introduced in \cite{broomhead1988radial}, and was used, among other things, for function approximation and time series prediction.
Usually, these networks have three layers and their output is a linear combination of radial basis functions. Under mild conditions this model can be shown to be a universal approximator of functions defined on compact subsets of $\Real^d$ \cite{park1991universal}. Our use of RBFs is quite different: RBFs are used in our extension operator solely for the purpose of defining point cloud operators, whereas the ReLU is used as an activation.


\section{Method}
\paragraph{Notations.}
We will use tensor (\ie, multidimensional arrays) notation,
\eg, $a\in \Real^{I\times I \times J \times L \times M}$. Indexing a particular entry is done using corresponding lower-case letters, $a_{ii'jlm}$, where $1\leq i,i' \leq I$, $1\leq j \leq J$, etc. When summing tensors $c=\sum_{ijl}a_{ii'jlm}b_{ijl}$, where $b\in\Real^{I\times J\times L}$ the dimensions of the result tensor $c$ are defined by the free indices, in this case $c=c_{i'm}\in\Real^{I\times M}$.

\paragraph{Goal.}
Our goal is to define convolutional neural networks on point clouds $X=\set{x_i}_{i=1}^I\in \Real^{I\times 3}$. Our approach to defining point cloud convolution is to extend functions on point clouds to volumetric functions, perform standard Euclidean convolution on these functions and sample them back on the point cloud.


We define an extension operator
\begin{equation}
	\E_X : \Real^{I\times J} \too C(\Real^3, \Real^J),
\end{equation}
where $\Real^{I\times J}$ represents the collection of functions $f:X\too \Real^J$, and $C(\Real^3,\Real^J)$ volumetric functions $\psi:\Real^3 \too \Real^J$. Together with the extension operator we define the restriction operator
\begin{equation}
	\R_x : C(\Real^3, \Real^M) \too  \Real^{I\times M}.
\end{equation}
Given a convolution operator $O:C(\Real^3,\Real^J)\too C(\Real^3, \Real^M)$ we adapt $O$ to the point cloud $X$ via \eqref{e:key}. We will show that a proper selection of such point cloud convolution operators possess the following desirable properties:
\begin{enumerate}
	\item \emph{Efficiency}: $O_X$ is computationally efficient.
	\item \emph{Invariance}: $O_X$ is indifferent to the order of points in $X$, that is, $O_X$ is equivariant.
	\item \emph{Robustness}: Assuming $X\subset S$ is a sampling of an underlying surface $S$, and $f\in C(S,\Real)$, then $\E_X\circ \R_X [f]\in C(\Real^3,\Real)$ approximates $f$ when sampled over $S$ and decays to zero away from $S$. In particular $\E_X[\ones]$ approximates the volumetric indicator function of $S$, where $\ones\in \Real^{I\times 1}$ is the vector of all ones; $\nabla \E_X[\ones]$ approximates the mean curvature normal field over $S$. The approximation property in particular implies that if $X,X^*\subset S$ are different samples of $S$ then $O_X \approx O_{X*}$.
	\item \emph{Translation invariance}: $O_X$ is translation invariant, defined by a stationary (\ie, location independent) kernel.
\end{enumerate}

In the next paragraphs we define these operators and show how they are used in defining the main building blocks of PCNN, namely: convolution, pooling and upsampling. We discuss the above theoretical properties in Section \ref{s:preperties}.
\vspace{-0.2cm}
\subsection*{Extension operator}

The extension operator $\E_X:\Real^{I\times J}\too C(\Real^3,\Real^J)$ is defined as an operator of the form,
\begin{equation}\label{e:extension}
\E_X[f](x)= \sum_i f_{ij} \ell_i(x),
\end{equation}

where $f\in \Real^{I\times J}$, and $\ell_i\in C(\Real^3,\Real)$ can be thought of as basis functions defined per evaluation point $x$.
One important family of bases are the Radial Basis Functions (RBF), that were proven to be useful for surface representation\cite{berger2017survey,carr2001reconstruction}. For example, one can consider interpolating bases (\ie, satisfying $\ell_i(x_{i'}) = \delta_{ii'}$) made out of an RBF $\Phi:\Real_+\too \Real$. Unfortunately, computing \eqref{e:extension} in this case amounts to solving a dense linear system of size $I\times I$. Furthermore, it suffers from bad condition number as the number of points is increased \cite{wendland2004scattered}. In this paper we will advocate a novel approximation scheme of the form
\begin{equation}\label{e:ell}
	\ell_i(x)=c\omega_i \Phi(|x-x_i|),
\end{equation}
where $c$ is a constant depending on the RBF $\Phi$ and $\omega_i$ can be thought of the amount of shape area corresponding to point $x_i$. A practical choice of $\omega_i$ is
\begin{equation}\label{e:practical_choice_of_omega}
	\omega_i = \frac{1}{c\sum_{i'} \Phi(|x_{i'}-x_i|)}.
\end{equation}
Note that although this choice resembles the Nadaraya-Watson kernel estimator \cite{nadaraya1964estimating}, it is in fact different as the denominator is independent of the approximation point $x$; this property will be useful for the closed-form calculation of the convolution operator.

As we prove in Section \ref{s:preperties}, the point cloud convolution operator, $O_X$, defined using the extension operator, \eqref{e:extension}-\eqref{e:ell}, satisfies the properties (1)-(4) listed above, making it suitable for deep learning on point clouds. In fact, as we show in Section \ref{s:preperties}, robustness is the result of the extension operator $\E_X$ approximating a continuous, sampling independent operator over the underlying surface $S$ denoted $\E_S$. This continuous operator applied to a function $f$,  $\E_S[f]$, is proved to approximate the restriction of $f$ to the surface $S$.

Figure \ref{fig:extension_op} demonstrates the robustness of our extension operator $\E_X$; applying it to the constant one function, evaluated on three different sampling densities of the same shape, results in approximately the same shape.

\begin{figure}
	\includegraphics[width=\columnwidth]{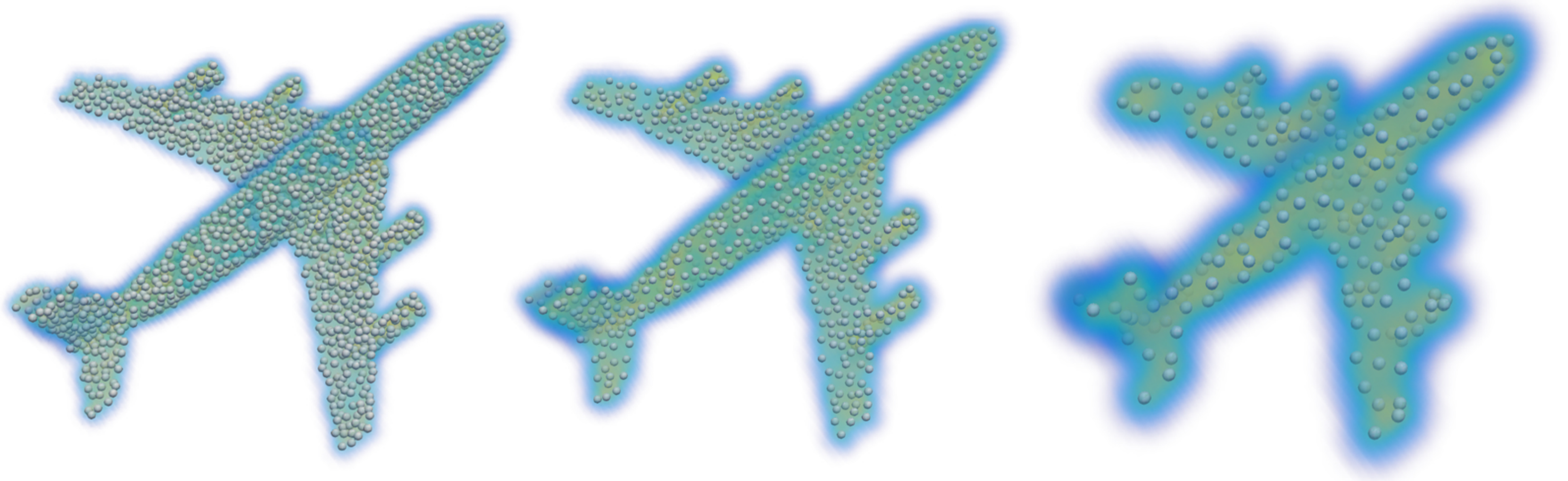}
	\caption {Applying the extension operator to the constant $\ones$ function over three airplane point clouds in different sampling densities: $2048$, $1024$ and $256$ points. Note how the extended functions resemble the airplane indicator function, and hence similar to each other.\vspace{-15pt}}\label{fig:extension_op}
	\end{figure}

\subsection*{Kernel model}
We consider a continuous convolution operator $O:C(\Real^3,\Real^J)\too C(\Real^3,\Real^M)$ applied to vector valued function $\psi\in C(\Real^3,\Real^J)$,
\begin{equation}\label{e:cont_conv}
	O[\psi](x) = \psi * \kappa \,(x) = \int_{\Real^3} \sum_j \psi_j(y)\, \kappa_{jm}(x-y) \, dy,
	\vspace{-0.1cm}
\end{equation}

where $\kappa \in C(\Real^3,\Real^{J\times M})$ is the convolution kernel that is also represented in the same RBF basis:
\begin{equation}\label{e:kernel}
\kappa_{jm}(z)=\sum_l k_{ljm}\Phi(|z-y_l|),
\end{equation}
where with a slight abuse of notation we denote by $k\in\Real^{L\times J \times M}$ the tensor representing the continuous kernel in the RBF basis. Note, that $k$ represents the network's learnable parameters, and has similar dimensions to the convolution parameters in the image case (\ie, spatial dimensions $\times$ input channels $\times$ output channels). 
\begin{wraptable}[8]{r}{0.04\columnwidth}
	\vspace{12pt}\hspace{-0pt}
	\includegraphics[width=0.07\columnwidth]{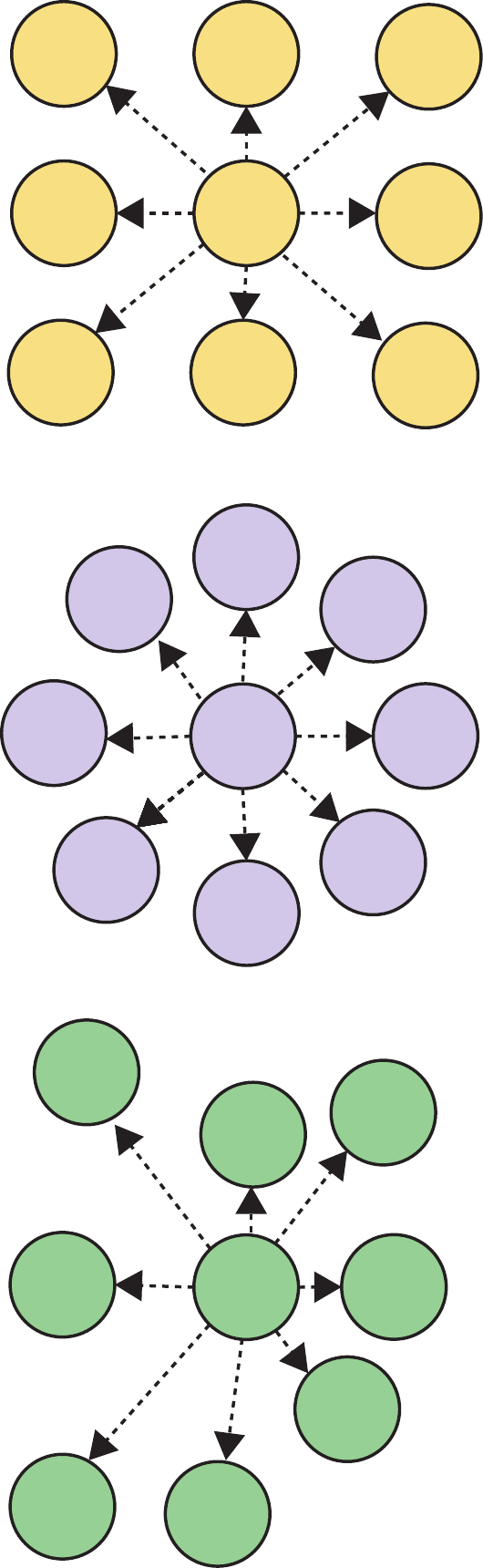}
	\vspace{-0cm}
\end{wraptable}

The translations  $\set{y_l}_{l=1}^L\subset \Real^3$ are also a degree of freedom and can be chosen to generate a regular $3\times 3 \times 3$ grid or any other point distribution such as spherical equispaced points. The translations can be predefined by the user or learned (with some similarly to \cite{dai2017deformable}). See inset for illustration of some possible translations.

\subsection*{Restriction operator}
Our restriction operator $\R_X:C(\Real^3,\Real^J)\too \Real^{I\times J}$ is the sampling operator over the point cloud $X$,
\begin{equation}\label{e:restriction}
	\R_X[\psi] = \psi_j(x_i),
\end{equation}
where $\psi\in C(\Real^3,\Real^J)$. Note that $\R_X[\psi]\in \Real^{I\times J}$.

\subsection*{Sparse extrinsic convolution}
We want to compute the convolution operator $O_X:\Real^{I\times J}\too \Real^{I\times M}$ restricted to the point cloud $X$ as defined in \eqref{e:key} with the convolution operator $O$ from \eqref{e:cont_conv}. First, we compute $\E_X[f]*k$
\begin{align*}
	\E_X[f]*k\, (x) & =  c \sum_{ijl} f_{ij} k_{ljm} w_i \int_{\Real^3}  \Phi(|y-x_{i}|)\Phi(|x-y-y_l|)\,dy
\end{align*}

Applying our restriction operator finally gives our point cloud convolution operator:
\begin{equation}\label{e:point_cloud_convolution}
	 \boxed{O_X[f] = c\sum_{ijl} f_{ij} k_{ljm} w_{i} q_{ii'l},}
\end{equation}
where $q=q(X)\in \Real^{I\times I \times L}$ is the tensor defined by
\begin{equation}\label{e:pair_conv}
	q_{ii'l}=\int_{\Real^3}  \Phi(|y-x_{i}|)\Phi(|x_{i'}-y-y_l|)\,dy.
\end{equation}
Note that $O_X[f] \in\Real^{I\times M}$, as desired. Equation \eqref{e:point_cloud_convolution} shows that the convolution's weights $k_{ljm}$ are applied to the data $f_{ij}$ using point cloud-dependent weights $w$, $q$ that can be seen as "translators" of $k$ to the point cloud geometry $X$.  Figure \ref{fig:convolutionComputationalFlow} illustrates the computational flow of the convolution operator.
\begin{figure}
	\includegraphics[width=\columnwidth]{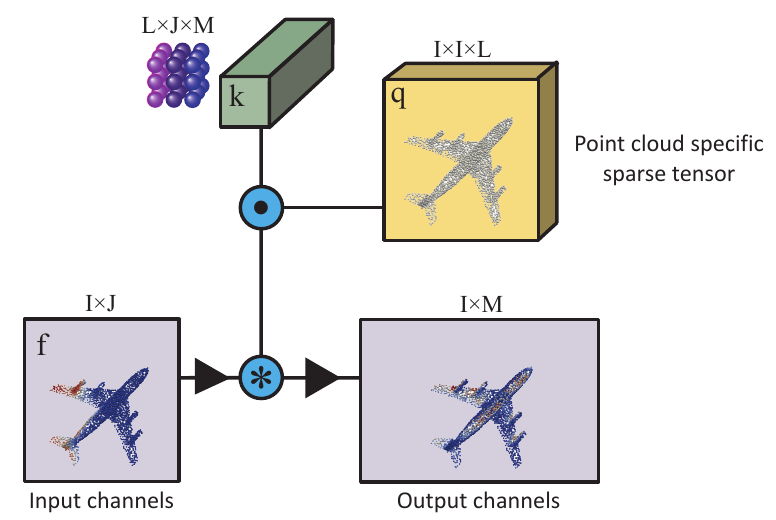}
	\caption {Point cloud convolution operator, computational flow. }\label{fig:convolutionComputationalFlow}
\end{figure}




\subsection*{Choice of RBF}
Our choice of radial basis function $\Phi$ stems from two desired properties: First, we want the extension operator \eqref{e:extension} to have approximation properties; second, we want the computation of the convolution of a pair of RBFs in \eqref{e:pair_conv} to have an efficient closed-form solution. A natural choice satisfying these requirements is the Gaussian:
\begin{equation}\label{e:gaussian}
\Phi_{\sigma}(r)=\exp\parr{-\frac{r^2}{2\sigma^2}}
\end{equation}
To compute the tensor $q\in \Real^{I\times I \times L}$ in \eqref{e:pair_conv} we make use of the following convolution rule for Gaussians, proved in Appendix \ref{app:proofs} for completeness:
\begin{lemma}\label{lem:gaussians}
	Let $\Phi$ denote the Gaussian as in \eqref{e:gaussian}. Then,
	\begin{equation}\label{e:conv_rule}
	\Phi_\alpha (|\cdot-a|)*\Phi_\beta (|\cdot-b|) \propto \Phi_{\gamma}(|\cdot-a-b|),
	\end{equation}
	where $\gamma = \sqrt{\alpha^2+\beta^2}$.

\end{lemma}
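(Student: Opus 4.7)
The plan is to reduce the translated convolution to the centered case by a change of variables, and then prove the centered identity either by completing the square in the exponent or, more cleanly, by Fourier transform. I expect the only mildly subtle step to be the bookkeeping in the square-completion; the Fourier route sidesteps it.

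Starting from
$$\bigl(\Phi_\alpha(|\cdot-a|) * \Phi_\beta(|\cdot-b|)\bigr)(x) = \int_{\Real^3} \Phi_\alpha(|y-a|)\,\Phi_\beta(|x-y-b|)\,dy,$$
I would substitute $z = y - a$ and set $u = x - a - b$. The integral becomes $\int_{\Real^3}\Phi_\alpha(|z|)\,\Phi_\beta(|u-z|)\,dz$, so it suffices to prove $(\Phi_\alpha * \Phi_\beta)(u) \propto \Phi_\gamma(|u|)$ with $\gamma^2 = \alpha^2 + \beta^2$ for centered Gaussians; evaluating at $u = x - a - b$ then yields (\ref{e:conv_rule}).

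For the centered identity, I would combine the two exponents into a single quadratic form in $z$ and complete the square about its $z$-minimizer $(\alpha^2/\gamma^2)\,u$. A short computation, which uses the identity $\gamma^2 - \alpha^2 = \beta^2$, produces
$$-\frac{|z|^2}{2\alpha^2} - \frac{|u-z|^2}{2\beta^2} = -\frac{\gamma^2}{2\alpha^2\beta^2}\left|z - \frac{\alpha^2}{\gamma^2}u\right|^2 - \frac{|u|^2}{2\gamma^2}.$$
Integrating in $z$ over $\Real^3$ yields a $u$-independent factor $(2\pi\alpha^2\beta^2/\gamma^2)^{3/2}$, and the residual $z$-independent exponent is exactly that of $\Phi_\gamma(|u|)$, establishing the proportionality.

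An alternative I would probably prefer to write is the Fourier transform: the Fourier transform of $\Phi_\sigma(|x - a|)$ at frequency $\xi$ is proportional to $e^{-\sigma^2|\xi|^2/2}\,e^{-i\langle a,\xi\rangle}$, so by the convolution theorem the transform of the left-hand side of (\ref{e:conv_rule}) is proportional to $e^{-(\alpha^2+\beta^2)|\xi|^2/2}\,e^{-i\langle a+b,\xi\rangle}$, which inverts to $\Phi_\gamma(|\cdot - a - b|)$. Either route establishes the lemma in a few lines and, as a byproduct, gives the explicit proportionality constant that may be useful for normalization bookkeeping in the sparse-extrinsic convolution formula (\ref{e:point_cloud_convolution}).
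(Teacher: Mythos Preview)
Your proof is correct. Both the reduction to the centered case via the substitution $z=y-a$ and the two alternative computations (square completion and Fourier transform) are standard and valid; the explicit constant $(2\pi\alpha^2\beta^2/\gamma^2)^{3/2}$ is also right.

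The paper takes a shorter, less self-contained route: it simply invokes the well-known fact that the convolution of two normal probability densities $B(\sigma_1)\Phi_{\mu_1,\sigma_1}*B(\sigma_2)\Phi_{\mu_2,\sigma_2}$ is again a normal density $B(\sigma)\Phi_{\mu,\sigma}$ with $\mu=\mu_1+\mu_2$ and $\sigma=\sqrt{\sigma_1^2+\sigma_2^2}$ (citing a standard reference), and then uses linearity of convolution to strip the normalizing constants $B(\cdot)$ and read off the proportionality factor. Your approach buys a fully self-contained argument and, via the Fourier route, a one-line conceptual proof; the paper's approach buys brevity by outsourcing the computation to a citation. Either is entirely adequate here.
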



\subsection{Up-sampling and pooling}
Aside from convolutions, there are two operators that perform spatially and need to be defined for point clouds: up-sampling $U_{X,X^*}:\Real^{I\times J} \too \Real^{I^*\times J}$, and pooling $P_{X,X^*}:\Real^{I\times J} \too \Real^{I^*\times J}$, where $X^*\subset \Real^3$ is superset of $X$ (\ie, $I^*>I$) in the upsampling case and subset of $X$ (\ie, $I^*<I$) in the pooling case. The upsample operator is defined by
\begin{equation}\label{e:upsample}
	U_{X,X^*}[f]=\R_{X^*}\circ \E_X [f].
\end{equation}
Pooling does not require the extension/restriction operators and (similarly to \cite{qi2017pointnet++}) is defined by
\begin{equation}
	P_{X,X^*}[f](x^*_i) = \max_{i\in \mathcal{V}_{i^*}} f_{ij},
\end{equation}
where $\mathcal{V}_{i^*}\subset \set{1,2,\ldots,I}$ denotes the set of indices of points in $X$ that are closer in Euclidean distance to $x^*_i$ than any other point in $X^*$.
The point cloud $X^* \subset X$ in the next layer is calculated using farthest point sampling of the input point cloud $X$.

Lastly, similarly to \cite{karras2017progressive} we implement \emph{deconvolution} layers by an upsampling layer followed by a regular convolution layer.

\section{Properties }\label{s:preperties}

In this section we discuss the properties of the point cloud operators we have defined above.

\subsection{Invariance and equivariance}
Given a function $f\in \Real^{I \times J}$ on a point cloud $X\in\Real^{I\times 3}$, an equivariant layer $L: \Real^{I\times J} \too \Real^{I\times M}$ satisfies $$L(\pi f) = \pi L(f),$$ where $\pi\in\Pi_I\subset \Real^{I\times I}$ is an arbitrary permutation matrix. Equivariant layers have been suggested in \cite{qi2016pointnet,ravanbakhsh2016deep,zaheer2017deep} to learn data in the form of point clouds (or sets in general). The key idea is that equivariant layers can be used to represent a set function $F:2^{\Real^3}\too \Real$. Indeed, a set function restricted to sets of fixed size (say, $I$) can be represented as a symmetric function (\ie, invariant to the order of its arguments). A rich class of symmetric functions can be built by composing equivariant layers and a final symmetric layer.

The equivariance of our point cloud operators $O_X$ stems from the invariance property of the extension operator and equivariance property of the restriction operator. We will next show these properties.
\begin{lemma}
	The extension operators defined in \eqref{e:extension} is invariant to permutations, \ie, $\E_{\pi X}[f]=\E_X[f]$, for all permutations $\pi\in \Pi_I$. The restriction operator \eqref{e:restriction} is equivariant to permutations, $\R_{\pi X}[\psi]=\pi \R_X[\psi]$, for all $\pi\in\Pi_I$.
\end{lemma}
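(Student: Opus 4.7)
The plan is a short index-chasing argument that exploits the symmetric dependence of the extension weights on the point cloud. First I fix the convention that a permutation $\pi \in \Pi_I$ acts on lists by $(\pi v)_i = v_{\pi^{-1}(i)}$, so $\pi X$ is the point cloud whose $i$-th point is $\tilde{x}_i := x_{\pi^{-1}(i)}$. Under this convention a ``function on a point cloud'' is really a set of (point, value) pairs, so the statement $\E_{\pi X}[f] = \E_X[f]$ should be read with $f$ relabelled consistently with $X$, that is $\tilde{f}_{ij} = f_{\pi^{-1}(i), j}$; otherwise the pairing between points and values would be destroyed.

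For the extension operator, I substitute $\tilde{x}_i$ and $\tilde{f}_{ij}$ into \eqref{e:extension}--\eqref{e:practical_choice_of_omega}. The crucial observation is that the denominator $\sum_{i'} \Phi(|\tilde{x}_{i'} - \tilde{x}_i|)$ in \eqref{e:practical_choice_of_omega} runs over \emph{all} points of the cloud and is therefore invariant under any reindexing of $i'$; reindexing $i' \mapsto \pi(i')$ shows $\tilde{\omega}_i = \omega_{\pi^{-1}(i)}$. Plugging $\tilde{x}_i$, $\tilde{f}_{ij}$ and $\tilde{\omega}_i$ into \eqref{e:extension} and then reindexing the outer sum by $k = \pi^{-1}(i)$ collapses the expression back to $\sum_k c\, \omega_k f_{kj}\, \Phi(|x - x_k|) = \E_X[f](x)$. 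The restriction operator is even more direct: by \eqref{e:restriction} the $(i,j)$ entry of $\R_{\pi X}[\psi]$ is $\psi_j(\tilde{x}_i) = \psi_j(x_{\pi^{-1}(i)})$, which is exactly the $(i,j)$ entry of $\pi \R_X[\psi]$.

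The only thing to keep straight is the direction of the permutation ($\pi$ versus $\pi^{-1}$) and, more importantly, the fact that the weight $\omega_i$ depends on the other points only through an \emph{unordered} sum; this symmetry of the normalization in \eqref{e:practical_choice_of_omega} is precisely what makes $\E_X$ permutation-invariant, so no substantive obstacle is expected beyond this bookkeeping.
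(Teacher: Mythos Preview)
Your proposal is correct and follows essentially the same index-chasing argument as the paper's proof, which simply reindexes the sum $\sum_i f_{\pi(i)j}\ell_{\pi(i)} = \sum_i f_{ij}\ell_i$ and reads off $\R_{\pi X}[\psi]_i = \psi_j(x_{\pi(i)}) = (\pi\R_X[\psi])_i$. In fact you are more careful than the paper on one point: you explicitly verify that the weight $\omega_i$ in \eqref{e:practical_choice_of_omega} permutes correctly because its denominator is an unordered sum over the cloud, a detail the paper's one-line proof leaves implicit in writing $\ell_{\pi(i)}$.
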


\begin{proof}
	The properties follow from the definitions of the operators. 	
	$$\E_{\pi(X)}[f]=\sum_i f_{\pi(i)j} \ell_{\pi(i)}= \sum_i f_{ij} \ell_{i}=\E_X[f], $$
	and  $$\R_X[\psi]=\psi_j(x_{\pi(i)}) = \Pi \R_X[\psi].$$
\end{proof}
A consequence of this lemma is that any convolution $O$ acting on volumetric functions in $\Real^3$ translates to an equivariant operator $O_X$,
\begin{theorem}\label{thm:OX_equivariant}
	Let $O:C(\Real^3,\Real^J)\too C(\Real^3,\Real^M)$ be a volumetric function operator. Then $O_X:\Real^{I\times J}\too \Real^{I\times M}$ defined by \eqref{e:key} is equivariant. Namely, $$O_{\pi X}[f]=\pi O_X [f].$$
\end{theorem}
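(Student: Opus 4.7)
The plan is to chase the definition $O_X = \R_X \circ O \circ \E_X$ and apply the preceding lemma twice---once to strip off the outer $\E_{\pi X}$ and once to pull the permutation $\pi$ out of the outer $\R_{\pi X}$. Since $O$ itself is a volumetric operator on $C(\Real^3, \Real^M)$ defined via the continuous convolution in \eqref{e:cont_conv}--\eqref{e:kernel} and has no knowledge of the point cloud $X$, it is completely unaffected when we replace $X$ by $\pi X$. The entire burden of the theorem is therefore absorbed by the two properties stated in the preceding lemma.

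Concretely, I would begin by expanding
\[
O_{\pi X}[f] \;=\; \R_{\pi X} \circ O \circ \E_{\pi X}[f].
\]
Next I would invoke the invariance half of the lemma, $\E_{\pi X}[f] = \E_X[f]$, to rewrite this as $\R_{\pi X}\bigl(O \circ \E_X[f]\bigr)$. Abbreviating $\psi := O \circ \E_X[f] \in C(\Real^3, \Real^M)$, the equivariance half of the lemma gives $\R_{\pi X}[\psi] = \pi\, \R_X[\psi]$. Substituting back and recognizing $\pi\, \R_X[\psi] = \pi\bigl(\R_X \circ O \circ \E_X[f]\bigr) = \pi\, O_X[f]$ closes the chain.

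The only subtlety worth flagging is conceptual rather than computational: one must verify that $O$ has no hidden dependence on the sampling, so that applying $O$ to the extended function produces the \emph{same} volumetric function regardless of whether we started from $X$ or from $\pi X$. This is immediate from the kernel model, because $\kappa$ is parameterized by $k$ and the translations $\{y_l\}$, neither of which refers to the points of $X$. Given this, the theorem reduces to a two-line calculation and is essentially a corollary of the lemma; I do not anticipate a real obstacle.
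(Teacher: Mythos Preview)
Your proposal is correct and matches the paper's approach: the theorem is presented there as an immediate consequence of the preceding lemma, and your two-step unwinding---invariance of $\E$ followed by equivariance of $\R$, with $O$ untouched in the middle---is exactly the intended argument. The only minor point is that the theorem is stated for an arbitrary volumetric operator $O$, not just the convolution of \eqref{e:cont_conv}--\eqref{e:kernel}; your reasoning (that $O$ carries no dependence on $X$) already covers this general case, so nothing changes.
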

Theorem \ref{thm:OX_equivariant} applies in particular to convolutions \eqref{e:cont_conv}, and therefore our point cloud convolutions are all equivariant by construction. Note that this model provides "data-dependent" equivariant operator that are more general than those suggest in \cite{ravanbakhsh2016deep,zaheer2017deep}.

\subsection{Robustness}

\paragraph{Overview.} Robustness is the key property that allows applying the same convolution kernel to functions over different irregular point clouds. The key idea is to make the extension operator produce approximately the same volumetric function when applied to different samplings of the same underlying shape function. To make things concrete, let $X\in\Real^{I\times 3}$, $X^*\in\Real^{I^*\times 3}$ be two different point clouds samples of a compact smooth surface $S\subset \Real^3$. Let $f\in C(S,\Real^J)$ be some function over $S$ and $\R_X[f], \R_{X^*}[f]$ its sampling on the points clouds $X,X^*$, respectively.

We will show the following:
\begin{enumerate}
	\item We introduce a continuous extension operator $E_S$  from surface functions to volumetric functions. We show that $E_S$ has several favorable properties.
	\item We show that (under mild assumptions) our extension operator $\E_X$, defined in \eqref{e:extension}-\eqref{e:ell} converges to $\E_S$,
	\begin{equation}\label{e:EX_approx_ES}
	\E_X \circ \R_X [f] \approx \E_S[f].
	\end{equation}	
	\item We deduce that (under mild assumptions) the properties of $\E_S$ are inherited by $\E_X$ and in particular we have:
	\begin{equation}\label{e:consistency}
	\E_X \circ \R_X[f]\approx \E_{X^*} \circ \R_{X^*}[f].
	\end{equation}
\end{enumerate}

\paragraph{Continuous extension operator.}
We define $\E_S: C(S,\Real^J) \too C(\Real^3,\Real^J)$ which is an extension operator from surface functions to volumetric functions so that $\E_S[f]\vert_S \approx f$ and $\E_S[f]\too 0$ away from $S$:  
\begin{equation}\label{e:E_S_def}
\E_S[f](x) = \frac{1}{2\pi \sigma^2}\int_{S} f(y) \Phi_\sigma(|x-y|) \,da(y),
\end{equation}
where $da$ is the area element of the surface $S$.

The operator $\E_S$  enjoys several favorable approximation properties: First, \begin{equation}\label{e:E_S_convergence}
		\E_S[f](x) \xrightarrow{\sigma\too 0}\begin{cases}
			f(x) & x\in S \\ 0 & \mathrm{otherwise}
		\end{cases}.		
	\end{equation}
	That is, $\E_S[f]$ approximates $f$ over $S$ and decays to zero away from $S$. In particular, this implies that the constant one function, $\ones:S\too \Real$, satisfies
	\begin{equation}\label{e:E_S_convergence_indicator}
		\E_S[\ones](x) \too \chi_S(x),
	\end{equation}
    where $\chi_S(x)$ is the volumetric indicator function of $S\subset \Real^3$. Interestingly, $\E_S[\ones]$ provides also higher-order geometric information of the surface $S$,
	\begin{equation}\label{e:E_S_mean_curvature}
		\nabla \E_S[\ones]\Big\vert_S \too -H\cdot n,
	\end{equation}
	where $H:S\too \Real$ is the mean curvature function of $S$ and $n:S\too S^2$ ($S^2\subset \Real^3$ is the unit sphere) is the normal field to $S$.

We prove that the approximation quality in \eqref{e:EX_approx_ES} improves as the point cloud sample $X\subset S$ densifies $S$, and the operator $\E_X$ becomes more and more consistent. In that case $\E_X[\ones]$ furnishes an approximation to the indicator function of the surface $S$ and its gradient, $\nabla \E_X[\ones]$, to the mean curvature vectors of $S$. This demonstrates that given the simplest, all ones input data $\ones\in\Real^{I\times 1}$, the network can already reveal the indicator function and the mean curvature vectors of the underlying surface by simple linear operators corresponding to specific choices of the kernel $k$ in \eqref{e:kernel}. 

These results are summarized in the following theorem which is proved in appendix \ref{app:proofs}.
\begin{theorem} \label{thm:extension_op_properties}
	Let $f\in C(S,\Real^J)$ be a continuous function defined on a compact smooth surface $S\subset \Real^3$. The extension operator \eqref{e:extension},\eqref{e:ell} with
	\begin{equation}\label{e:choice_of_c}
		c = \frac{1}{2\pi\sigma^2},
	\end{equation}
	and
	\begin{subequations}
		\begin{align} \label{e:omega}
			\omega_i &= \area (\Omega_i), \\ \label{e:Omega}
			\Omega_i&=\set{y\in S \ \vert \ d_S(y-x_i)\leq d_S(y-x_{i'}),\  \forall i'},
		\end{align}
	\end{subequations}
	the Voronoi cell of $x_i\in S$, where $d_S$ denotes the distance function of points on $S$, satisfies
	\begin{equation}\label{e:E_approx_E_S}
		\E_X\circ \R_X[f](x) \too  \E_S[f](x),
	\end{equation}
	where $X\subset S$ is a $\delta$-net and $\delta\too 0$. Furthermore, $\E_S$ satisfies the approximation and mean curvature properties as defined in \eqref{e:E_S_convergence}, \eqref{e:E_S_convergence_indicator}, \eqref{e:E_S_mean_curvature}. 	
	\end{theorem}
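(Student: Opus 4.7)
The theorem has two qualitatively different parts that I would tackle separately: first the asymptotic properties of the \emph{continuous} operator $\E_S$ as $\sigma\too 0$ (the approximation identity \eqref{e:E_S_convergence}, its corollary \eqref{e:E_S_convergence_indicator}, and the mean curvature limit \eqref{e:E_S_mean_curvature}), and then the discrete-to-continuous convergence \eqref{e:E_approx_E_S} as the $\delta$-net $X\subset S$ densifies. Handling $\E_S$ first is natural because the integral representation makes the Laplace-type arguments transparent, and \eqref{e:E_approx_E_S} is then just a Riemann-sum approximation to the resulting integral.

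\emph{Discrete-to-continuous \eqref{e:E_approx_E_S} (with $\sigma$ fixed, $\delta\too 0$).} With $c=1/(2\pi\sigma^2)$ and $\omega_i=\area(\Omega_i)$, the discrete extension reads $\E_X\circ\R_X[f](x)=c\sum_i f(x_i)\area(\Omega_i)\Phi_\sigma(|x-x_i|)$. Since $\set{\Omega_i}$ partitions $S$, I would subtract this from $\E_S[f](x)=c\sum_i \int_{\Omega_i}f(y)\Phi_\sigma(|x-y|)\,da(y)$ to get
$$\E_S[f](x)-\E_X\circ\R_X[f](x)=c\sum_i\int_{\Omega_i}\brac{f(y)\Phi_\sigma(|x-y|)-f(x_i)\Phi_\sigma(|x-x_i|)}da(y).$$
Because $X$ is a $\delta$-net, $\mathrm{diam}(\Omega_i)\leq 2\delta$, so by uniform continuity of $y\mapsto f(y)\Phi_\sigma(|x-y|)$ on the compact $S$, the bracketed integrand is bounded by some $\eps(\delta)\too 0$ independent of $i$. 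Summing against $\sum_i\area(\Omega_i)=\area(S)<\infty$ yields the claim.

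\emph{Approximation identity \eqref{e:E_S_convergence} and indicator limit \eqref{e:E_S_convergence_indicator}.} For $x\in S$, I would work in a Monge chart centered at $x$: $y=x+(u,v,h(u,v))$ with $h(0)=0$ and $\nabla h(0)=0$. Splitting the integral into a geodesic disk $\set{u^2+v^2\leq \rho^2}$ and its complement, the complement contributes at most $c\,\area(S)\,e^{-\rho^2/2\sigma^2}\too 0$. On the disk, the expansions $|x-y|^2=u^2+v^2+O((u^2+v^2)^2)$, $da=(1+O(u^2+v^2))du\,dv$, and $f(y)=f(x)+o(1)$ reduce the leading contribution to $f(x)\cdot\frac{1}{2\pi\sigma^2}\int_{\Real^2}e^{-(u^2+v^2)/2\sigma^2}du\,dv=f(x)$ --- the choice $c=1/(2\pi\sigma^2)$ is precisely the 2D Gaussian normalization. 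For $x\notin S$, $d(x,S)>0$ gives the uniform bound $\Phi_\sigma(|x-y|)\leq e^{-d(x,S)^2/2\sigma^2}$ on $S$, so $\E_S[f](x)\too 0$. The indicator statement \eqref{e:E_S_convergence_indicator} follows by specializing $f\equiv\ones$.

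\emph{Mean curvature \eqref{e:E_S_mean_curvature} and main obstacle.} Differentiating under the integral (justified by the uniform Gaussian decay in $y$ for $x$ in a neighborhood) gives $\nabla\E_S[\ones](x)=\frac{1}{2\pi\sigma^4}\int_S(y-x)\Phi_\sigma(|x-y|)da(y)$. In the same Monge chart, $y-x=(u,v,h(u,v))$ with $h(u,v)=\frac{1}{2}(\kappa_1 u^2+\kappa_2 v^2)+O(|(u,v)|^3)$, where $\kappa_1,\kappa_2$ are the principal curvatures at $x$. Tangential components are odd in $(u,v)$ at leading order and their integrals collapse to $O(\sigma)$; the normal component, via the explicit Gaussian moment $\int_{\Real^2}u^2 e^{-(u^2+v^2)/2\sigma^2}du\,dv=2\pi\sigma^4$, yields $\frac{1}{2}(\kappa_1+\kappa_2)=H$ at leading order, with the sign in \eqref{e:E_S_mean_curvature} fixed by the outward-normal convention. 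The main obstacle is here: one must confirm simultaneously that the $O(u^2+v^2)$ corrections in $|x-y|^2$, in $da$, and in the third-order Taylor terms of $h$ do not contaminate the leading constant of the normal component, and that the ostensibly vanishing tangential contributions really are $O(\sigma)$ and not $O(1)$. This requires coordinating second-order Taylor control in $h$, in the area element, and in the Gaussian exponent at once --- a bookkeeping task that is the substantive content beyond the otherwise standard Riemann-sum and approximation-of-identity steps.
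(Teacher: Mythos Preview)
Your proposal is correct and, for the discrete-to-continuous convergence \eqref{e:E_approx_E_S} and the approximation identity \eqref{e:E_S_convergence}, it follows essentially the same route as the paper: a Riemann-sum comparison on the Voronoi partition (the paper packages the continuity step as a uniform-equicontinuity lemma for the family $g_x(y)=f(y)\Phi_\sigma(|x-y|)$, which yields convergence uniform in $x$, slightly more than the pointwise statement you prove), and a Monge-chart splitting with $\delta$ tied to $\sigma$ for the identity $\frac{1}{2\pi\sigma^2}\int_S\Phi_\sigma(|x-y|)\,da(y)\to 1$.

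The one genuine difference is in the mean-curvature limit \eqref{e:E_S_mean_curvature}. The paper does not carry out the second-order Monge-chart moment bookkeeping you flag as the main obstacle; instead, after writing $\nabla_x\E_S[\ones](x)=-\frac{1}{2\pi\sigma^4}\int_S(x-y)\Phi_\sigma(|x-y|)\,da(y)$, it observes that this is exactly the heat-kernel approximation to $-\tfrac{1}{2}\Delta_S x$ analyzed by Belkin and Niyogi (with $\sigma^2=2t$), cites that result, and closes with the identity $-\tfrac{1}{2}\Delta_S x=-H\cdot n$. Your direct computation is more self-contained and makes the role of the Gaussian moment $\int_{\Real^2}u^2 e^{-(u^2+v^2)/2\sigma^2}\,du\,dv=2\pi\sigma^4$ and the principal-curvature expansion explicit; the paper's route sidesteps the higher-order bookkeeping you identify as the obstacle by outsourcing it to an existing convergence theorem. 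Either approach is fine here, and your identification of where the work lies is accurate.
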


\subsection{Revisiting image CNNs} Our model is a generalization of image CNNs. Images can be viewed as point clouds in regular grid configuration, $X=\set{x_i}\subset \Real^3$, with image intensities $\mathcal{I}_i$ as functions over this point cloud,
$$\E_X(\mathcal{I})=\sum_{i} \mathcal{I}_i \Phi(x-x_i),$$
where $\Phi$ is the indicator function over one square grid cell (\ie, pixel). 
In this case the extension operator reproduces the image as a volumetric function over $\Real^2$. Writing the convolution kernel also in the basis $\Phi$ with regular grid translations leads to \eqref{e:key} reproducing the standard image discrete convolution.

\section{Experiments}

We have tested our PCNN framework on the problems of point cloud classification, point cloud segmentation, and point cloud normal estimation. We also evaluated the different design choices and network variations.

\begin{figure}
	\includegraphics[width=0.9\columnwidth]{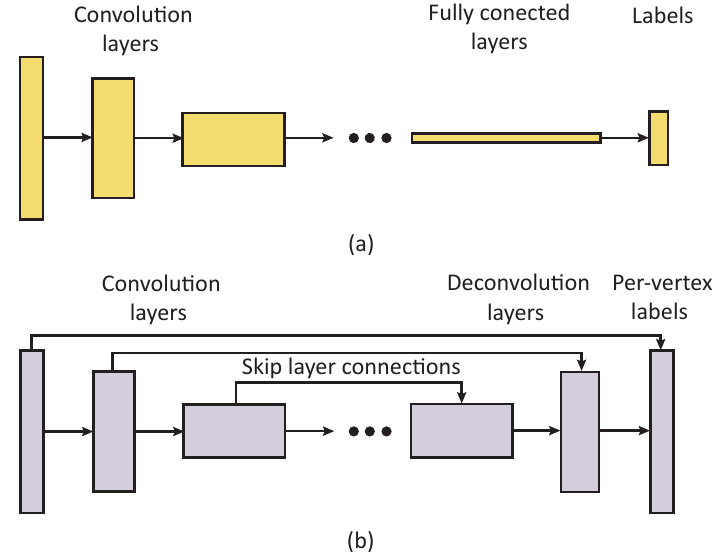}
	\caption{Different architectures used in the paper: (a) classification network; and (b) segmentation network.}\label{fig:arch}
\end{figure}

\subsection{Point cloud classification}
We tested our method on the standard ModelNet40 and ModelNet10 benchmarks \cite{wu20153d}. ModelNet40 is composed of 9843 train models and 2468 test models in 40 different classes, such as guitar, cone, laptop etc. ModelNet 10 consists 3991 train and 908 test models from ten different classes. The models are originally given as triangular meshes. The upper part of Table \ref{tab:classifictionRes} compares our classification results versus state of the art learning algorithms that use only the point clouds (\ie, coordinates of points in $\Real^3$) as input: PointNet \cite{qi2016pointnet}, PointNet++ \cite{qi2017pointnet++}, deep sets \cite{zaheer2017deep}, ECC\cite{simonovsky2017dynamic} and kd-network\cite{klokov2017escape}. For completeness we also provide results of state of the art algorithms taking as input additional data such as meshes and normals. Our method outperforms all point cloud methods and all other non-ensemble methods.

We use the point cloud data of \cite{qi2016pointnet,qi2017pointnet++} that sampled a point cloud from each model using farthest point sampling. In the training we randomly picked 1024 farthest point sample out of a fixed set of 1200 farthest point sample for each model. As in \cite{klokov2017escape} we also augment the data with random anisotropic scaling in the range $[-0.66,1.5]$ and uniform translations in the range $[-0.2,0.2]$. As input to the network we provide the constant one tensor, together with the coordinate functions of the points, namely $(1,x)\in \Real^{I\times 4}$. The $\sigma$ parameter controls the variance of the RBFs (both in the convolution kernels and the extension operator) and is chosen to be $\sigma=I^{-1/2}$. The translations of the convolution are chosen to be regular $3\times 3 \times 3$ grid with size $2\sigma$.

At test time, similarly to \cite{klokov2017escape} we use voting: we sample ten different samples of size 1024 from  1200 points on each point cloud, apply anisotropic scaling, propagate it through the net and sum the label probability vectors before taking the label with the maximal probability.

We used standard convolution architecture, see Figure \ref{fig:arch}:
\begin{align*}
&\mathrm{conv\_block}(1024,256,64) \rightarrow {\mathrm{conv\_block}(256,64,256)} \\ & \rightarrow  \mathrm{conv\_block}(64,1,1024) \rightarrow \mathrm{fully\_connected\_block},
\end{align*}
where conv\_block(\#points in, \#points out ,\#channels) consists of a convolution layer, batch normalization, Relu activation and pooling. The fully connected block is a concatenation of a two fully connected layers with dropout after each one.

\begin{table*}[h]
	\centering
	\begin{tabular}{lccc}
		\toprule
		algorithm & \# points & 10 models & \multicolumn{1}{l}{40 models} \\
		\midrule		
		\textbf{Point cloud methods} &&&\\
		
		\midrule
		pointnet \cite{qi2016pointnet}& \multicolumn{1}{c}{1024} & -    & 89.2 \\
		pointnet++ \cite{qi2017pointnet++}& \multicolumn{1}{c}{1024} & -    & 90.7 \\
		deep sets \cite{zaheer2017deep}& \multicolumn{1}{c}{1000} & -    & 87.1 \\
		ECC  \cite{simonovsky2017dynamic} & \multicolumn{1}{c}{1000} & \multicolumn{1}{c}{90.8} & 87.4 \\
		kd-network \cite{klokov2017escape} & \multicolumn{1}{c}{1024} & \multicolumn{1}{c}{93.3} & 90.6 \\
		kd-network \cite{klokov2017escape} & \multicolumn{1}{c}{32k} & \multicolumn{1}{c}{94.0} & 91.8 \\
		ours  & \multicolumn{1}{c}{1024} & \multicolumn{1}{c}{\textbf{94.9}} & \textbf{92.3}  \\
		\midrule
		\textbf{Additional input features} &&&\\
		\midrule		
		FusionNet (uses mesh structure) \cite{hegde2016fusionnet} & -    & 93.1    & 90.8 \\
		VRN single(uses mesh structure) \cite{brock2016generative}& -    & -    & 92.0 \\
		OctNet \cite{riegler2016octnet} & -    & \multicolumn{1}{c}{90.9} & 86.5 \\
		
		VRN ensemble(uses mesh structure) \cite{brock2016generative} & -    & \multicolumn{1}{c}{97.1} & 95.5 \\
		MVCNN (uses mesh structure)\cite{qi2016volumetric} & -    & -    & 92.0 \\
		MVCNN-MultiRes(uses mesh structure)\cite{qi2016volumetric} & -    & -    & 93.8 \\
		pointnet++ (uses normals) \cite{qi2017pointnet++}& 5K    & -    & 91.9 \\
		
		OCNN (uses normals) \cite{wang2017cnn}  &- &- & 90.6\\
		\bottomrule
	\end{tabular}	\caption{Shape classification results on the ModelNet40 and ModelNet10 datasets.} \label{tab:classifictionRes}	
\end{table*}

\paragraph{Robustness to sampling.} The inset compares our method with \cite{qi2016pointnet,qi2017pointnet++} when feeding a trained $1024$ point model on sparser test point clouds of size $k=1024,512,256,128,64$.
\begin{wraptable}[8]{r}{0.45\columnwidth}
	\vspace{-0pt}\hspace{-0pt}
	\includegraphics[width=0.45\columnwidth]{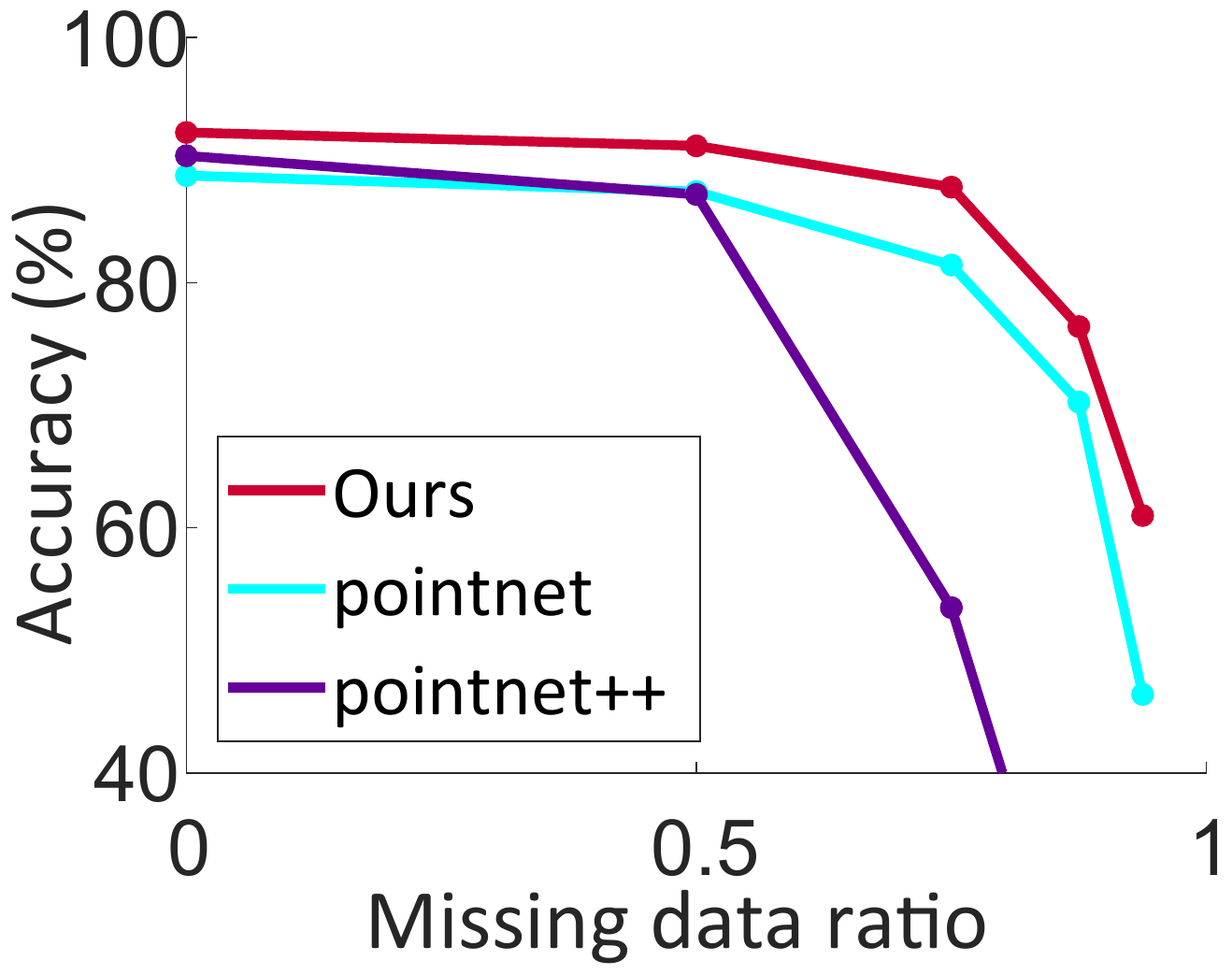}	
	\vspace{-0cm}
\end{wraptable}
The favorable robustness of our method to sub-sampling can be possibly explained by the fact that our extension operator possess approximation power, even with sparse samples, \eg, for smooth shapes, see Figure \ref{fig:extension_op}.

\begin{table}[h]
	\centering
	\begin{tabular}{lrr}
		\toprule
		\multicolumn{3}{c}{Method variations on ModelNet40} \\
		\midrule
		Variation & \multicolumn{1}{l}{\# points} & \multicolumn{1}{l}{accuracy} \\
		\midrule
		Less points  & 256   &  90.8\\		
		Less points  & 512   &  91.2 \\
		Less points & 870 & 92.2 \\		
		More points  & 1800  &  92.3 \\		
		Interpolation & 1024 & 92.0\\
		Spherical kernel & 1024  & 91.5 \\
		Learned translations & 1024  & 91.3 \\		
		Indicator input & 1024 & 91.2 \\
		xyz only input & 1024 & 91.5 \\		
		Less parameters &  1024 & 92.2 \\
		Small sigma & 1024 & 85.5 \\
		\bottomrule
	\end{tabular}%
	\caption{Classification with variations to the PCNN model.}
	\label{tab:classifictionResOurVariation}%
\end{table}%

\paragraph{Method variants.} We evaluate the performance of our algorithm subject to the variation in: the number of points $I$, the kernel translations $\set{y_l}$, the input tensor $f$, different bases $\ell_i$ in \eqref{e:extension}, the choice of $\sigma$, and number of learnable parameters. Points were randomly sampled by the same ratio as in the above (\eg~512 out of 600).

Table \ref{tab:classifictionResOurVariation} presents the results.
Using the constant one as input $f=\ones\in \Real^{I\times 1}$ provides almost comparable results to using the full coordinates of the points $f=(\ones,x)\in\Real^{I\times 4}$. This observation is partially supported by the theoretical analysis shown in section \ref{s:preperties} which states that our extension operator applied to the constant one tensor already provides good approximation to the underlying surface and its normal as well as curvature.
Using interpolation basis $\set{\ell_i}$ in the extension operator \eqref{e:extension}, although heavier computationally, does not provide better results.
Applying too small $\sigma$ provides worse classification result. This can be explained by the observation that small $\sigma$ results in separated Gaussians centered at the points which deteriorates the approximation ($X$ and $\sigma$ should be related).
Interestingly, using a relatively small network size of 1.4M parameters provides comparable classification result.

\begin{figure}[t]
	\includegraphics[width=\columnwidth]{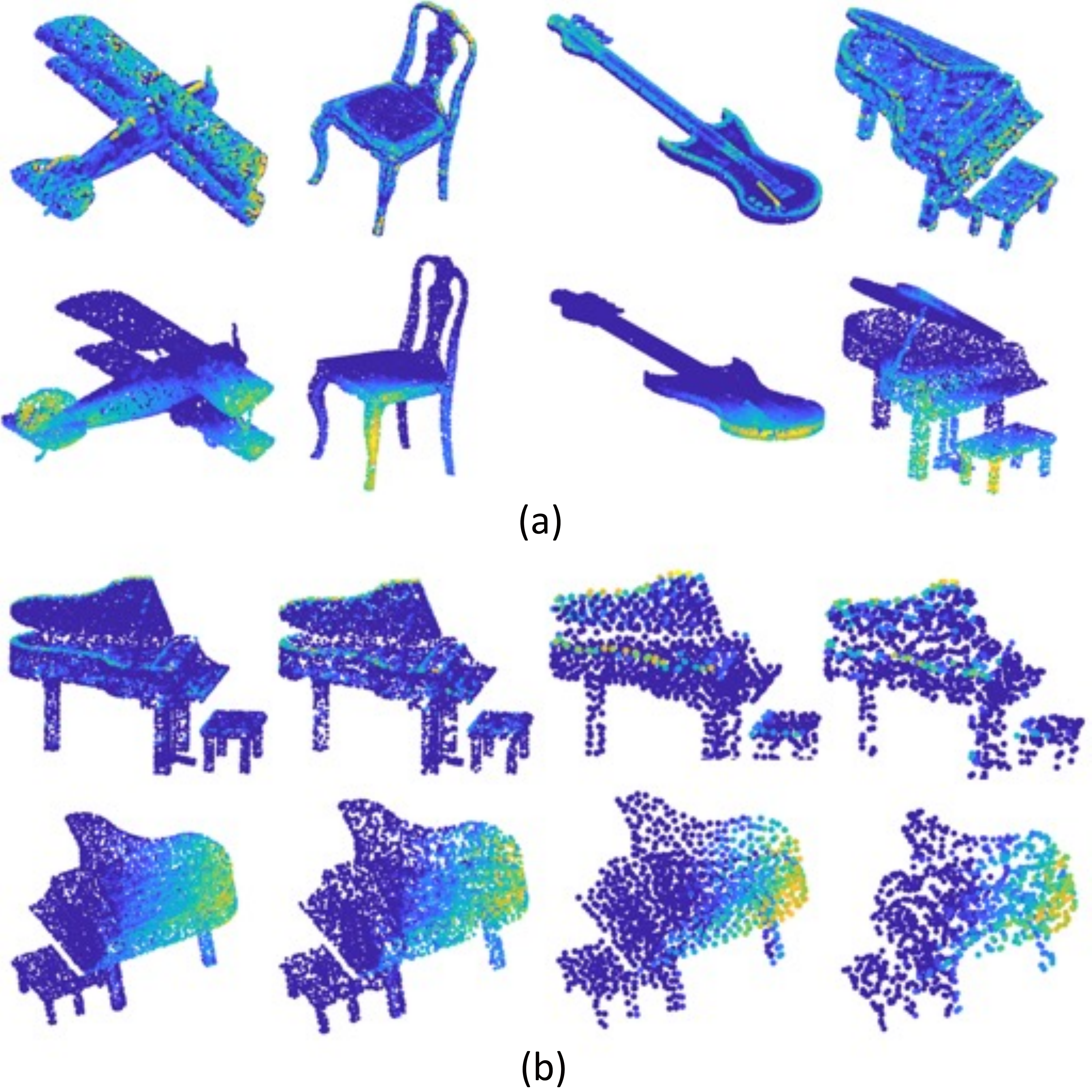}
	\caption {Our point cloud convolution is translation invariant and robust to sample size and density: (a) shows feature activations of two kernels (rows) learned by our network's first convolution layer on different shapes (columns). The features seems consistent across the different models; (b) shows another pair of kernels (rows) on a single model with varying sampling density (from left to right): $10K$ points, $5K$ points (random sampling), 1K points (farthest point sampling) and $1K$ (random sampling). Note that the convolution captures the same geometric properties on all models regardless of the sampling.}\label{fig:feat1}
\end{figure}

\paragraph{Feature visualizations.} Figure \ref{fig:feat1} visualizes the features learned in the first layer of PCNN on a few shapes from the ModelNet40 dataset. As in the case of images, the features learned on the first layer are mostly edge detectors and directional derivatives. Note that the features are consistent through different sampling and shapes. Figure \ref{fig:highlevelfeats} shows 9 different features learned in the third layer of PCNN. In this layer the features capture more semantically meaningful parts.

\begin{figure}
	\includegraphics[width=\columnwidth]{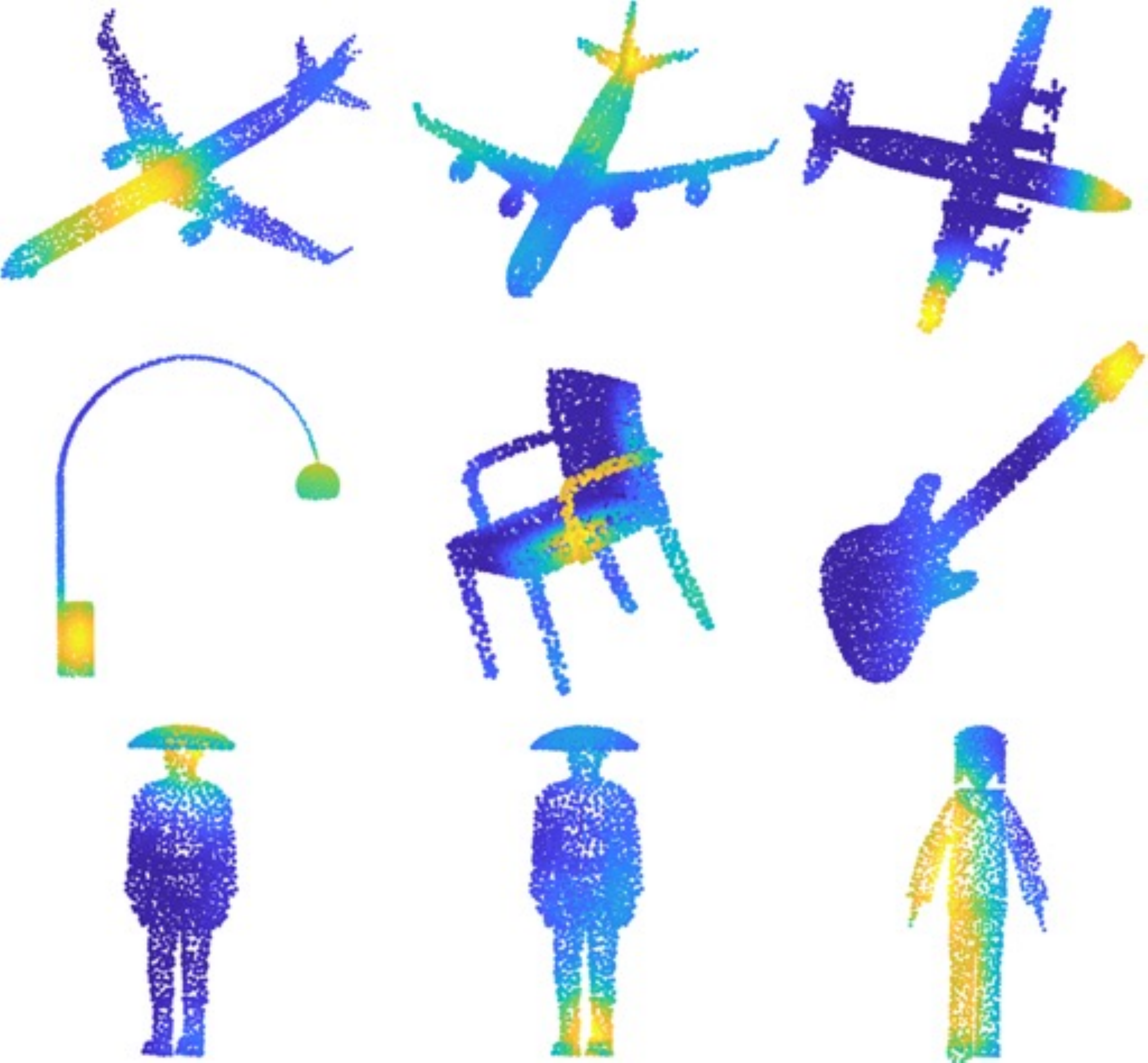}
	\caption {High level features learned by PCNN's third convolution layer and visualized on the input point cloud. As expected, the features are less geometrical than the first layer's features (see Figure \ref{fig:feat1}) and seem to capture more semantically meaningful shape parts.}\label{fig:highlevelfeats}
\end{figure}

\subsection{Point cloud segmentation}\label{s:segmentation}
Our method can also be used for part segmentation: given a point cloud that represents a shape the task is to label each point with a correct part label. We evaluate PCNN performance on ShapeNet part dataset \cite{yi2016scalable}. ShapeNet contains 16,881 shapes from 16 different categories, and total of 50 part labels.

Table \ref{tab:segres} compares per-category and mean IoU(\%) scores of PCNN with state of the art point cloud methods: PointNet \cite{qi2016pointnet}, kd-network \cite{klokov2017escape}, and 3DCNN (results taken from \cite{qi2016pointnet}). Our method outperforms all of these methods. For completeness we also provide results of other methods that use additional shape features or mesh normals as input. Figure \ref{fig:seg} depicts several of our segmentation results.

For this task we used standard convolution segmentation architecture, see Figure \ref{fig:arch}:
\begin{align*}
&\mathrm{conv\_block}(2048,512,64) \rightarrow {\mathrm{conv\_block}(512,128,128)} \\
&\rightarrow \mathrm{conv\_block}(128,16,256) \rightarrow {\mathrm{deconv\_block}(16,128,512)} \\ & \rightarrow  \mathrm{deconv\_block}(128,512,256) \rightarrow  \mathrm{deconv\_block}(512,2048,256)\\ & \rightarrow  \mathrm{deconv\_block}(2048,2048,256) \rightarrow \mathrm{conv\_block}(2048,2048,50),
\end{align*}
where deconv\_block(\#points in,\#points out,\#features) consists of an upsampling layer followed by a convolution block.
In order to provide the last layers with raw features we also add skip-layers connections, see Figure \ref{fig:arch}(b). This is a common practice in such architectures where fine details are needed at the output layer (\eg, \cite{cciccek20163d}).

We use the data from \cite{qi2016pointnet} (2048 uniformly sampled points on each model). As done in \cite{qi2016pointnet} we use a single network to predict segmentations for each of the object classes and concatenate a hot-one encoding of the object's label to the bottleneck feature layer. At test time, we use only the part labels that correspond to the input shape (as in \cite{qi2016pointnet, klokov2017escape}).

\begin{table*}[t]
	\vspace{-1.8cm}
	\resizebox{\textwidth}{!}{%
	\centering {\footnotesize	\begin{tabular}{@{\hskip0pt}l@{\hskip2pt}l@{\hskip2pt}r@{\hskip2pt}r@{\hskip2pt}r@{\hskip2pt}r@{\hskip2pt}r@{\hskip2pt}r@{\hskip2pt}r@{\hskip2pt}r@{\hskip2pt}r@{\hskip2pt}r@{\hskip2pt}r@{\hskip2pt}r@{\hskip2pt}r@{\hskip2pt}r@{\hskip2pt}r@{\hskip2pt}r@{\hskip2pt}r@{\hskip0pt}}
			\toprule
			& \multicolumn{1}{l}{input} & \multicolumn{1}{l}{mean} & \multicolumn{1}{l}{aero} & \multicolumn{1}{l}{bag} & \multicolumn{1}{l}{cap} & \multicolumn{1}{l}{car} & \multicolumn{1}{l}{chair} & \multicolumn{1}{l}{ear-p} & \multicolumn{1}{l}{guitar} & \multicolumn{1}{l}{knife} & \multicolumn{1}{l}{lamp } & \multicolumn{1}{l}{laptop} & \multicolumn{1}{l}{motor} & \multicolumn{1}{l}{mug} & \multicolumn{1}{l}{pistol} & \multicolumn{1}{l}{rocket} & \multicolumn{1}{l}{skate} & \multicolumn{1}{l}{table} \\
			\midrule
			\textbf{Point clouds}\\
			\midrule
			\textbf{Ours}&   2K pnts  & \textbf{85.1}  &    82.4   &  \textbf{80.1}     &  \textbf{85.5}     &  \textbf{79.5}     &  \textbf{90.8}     &  73.2     &    91.3   & 86.0     &  \textbf{85.0}     &  \textbf{95.7}     &  \textbf{73.2}     &  \textbf{94.8}     &   \textbf{83.3}    &  51.0     &  \textbf{75.0}     & \textbf{81.8} \\
			
			PointNet &   2K pnts & 83.7  & \textbf{83.4}  & 78.7  & 82.5  & 74.9  & 89.6  & 73    & \textbf{91.5}  & 85.9  & 80.8  & 95.3  & 65.2  & 93    & 81.2  & 57.9  & 72.8  & 80.6 \\
			kd-network&   4K pnts & 82.3  & 80.1  & 74.6  & 74.3  & 70.3  & 88.6  & \textbf{73.5}  & 90.2  & \textbf{87.2}  & 81    & 94.9  & 57.4  & 86.7  & 78.1  & 51.8  & 69.9  & 80.3 \\
			3DCNN &    & 79.4  & 75.1  & 72.8  & 73.3  & 70    & 87.2  & 63.5  & 88.4  & 79.6  & 74.4  & 93.9  & 58.7  & 91.8  & 76.4  & 51.2  & 65.3  & 77.1 \\
			\midrule
			\textbf{Additional input}\\
			\midrule
			SyncSpecCNN&   sf   & 84.7  &       &       &       &       &       &       &       &       &       &       &       &       &       &       &       &  \\
			Yi &   sf    & 81.4  & 81    & 78.4  & 77.7  & 75.7  & 87.6  & 61.9  & 92.0    & 85.4  & 82.5  & 95.7  & 70.6  & 91.9  & 85.9  & 53.1  & 69.8  & 75.3 \\
			PointNet++   &   pnts, nors   & 85.1  & 82.4    & 79.0  & 87.7  & 77.3  & 90.8  & 71.8  & 91.0    & 85.9  & 83.7  & 95.3  & 71.6  & 94.1  & 81.3  & 58.7  & 76.4  & 82.6 \\
			OCNN (+CRF)   &   nors  & 85.9  & 85.5    & 87.1  & 84.7  & 77.0  & 91.1  & 85.1  & 91.9    & 87.4  & 83.3  & 95.4  & 56.9  & 96.2  & 81.6  & 53.5  & 74.1  & 84.4 \\		
			\bottomrule
	\end{tabular}}}
	\caption{ShapeNet segmentation results by point cloud methods (top) and methods using additional input data (sf - shape features; nors - normals). The methods compared to are: PointNet  \protect\cite{qi2016pointnet}; kd-network  \cite{klokov2017escape}; 3DCNN \cite{qi2016pointnet}; SyncSpecCNN  \cite{yi2016syncspeccnn}; Yi \cite{yi2016scalable};	PointNet++ \cite{qi2017pointnet++}; OCNN (+CRF refinement) \cite{wang2017cnn}.}\label{tab:segres}
\end{table*}%

\begin{figure}[t]
	\vspace{-0.29cm}
	\includegraphics[width=\columnwidth]{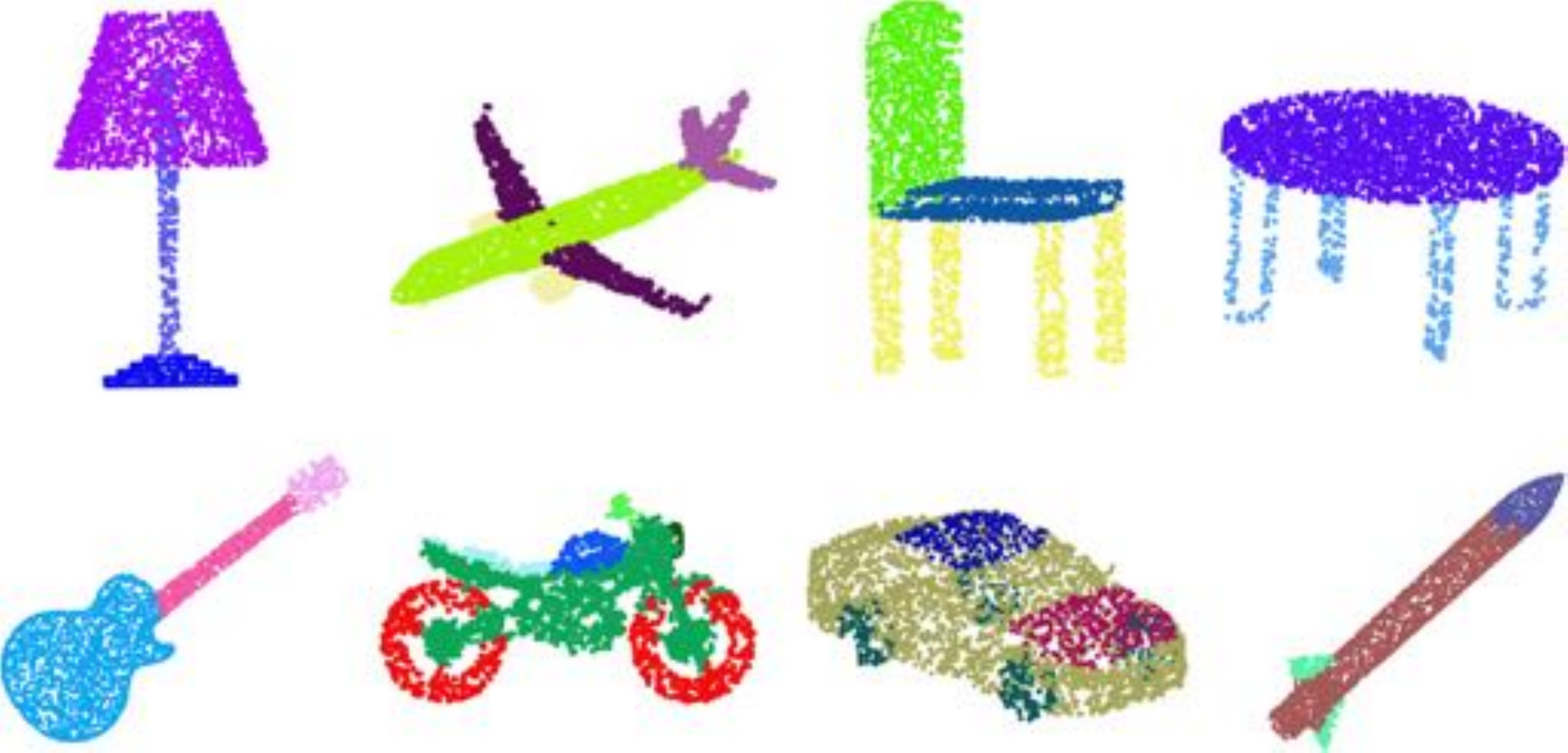}
	\caption {Results of PCNN on the part segmentation benchmark from \cite{yi2016scalable} }\label{fig:seg}
\end{figure}

\subsection{Normal estimation}
Estimating normals of a point cloud is a central sub-problem of the 3D reconstruction problem. We cast this problem as supervised regression problem and employ segmentation network with the following changes: the output layer is composed of $3$ channels instead of $50$ which are then normalized and fed into cosine-loss with the ground truth normals.

We have trained and tested our network on the standard train/test splits of the ModelNet40 dataset (we used the data generator code by \cite{qi2017pointnet++}). Table \ref{tab:normalRes} compares the mean cosine loss (distance) of PCNN and the normal estimation of \cite{qi2016pointnet} and \cite{qi2017pointnet++}. Figure \ref{fig:normals} depicts normal estimation examples from this challenge.

\begin{table}[htbp]
	\centering
	\begin{tabular}{rlrl}
		\toprule
		\multicolumn{1}{l}{Data } & algorithm & \multicolumn{1}{l}{\# points} & \multicolumn{1}{l}{error} \\
		\midrule
		\multicolumn{1}{l}{ModelNet40} & PointNet & 1024  & 0.47 \\
		& PointNet++ & 1024  & 0.29 \\
		& \textbf{ours}  & \textbf{1024}  &  \textbf{0.19}\\
		\bottomrule
	\end{tabular}%
	\caption{Normal estimation in ModelNet40.}
	\label{tab:normalRes}%
\end{table}%

\begin{figure}
	\vspace{-0.85cm}
	\includegraphics[width=\columnwidth]{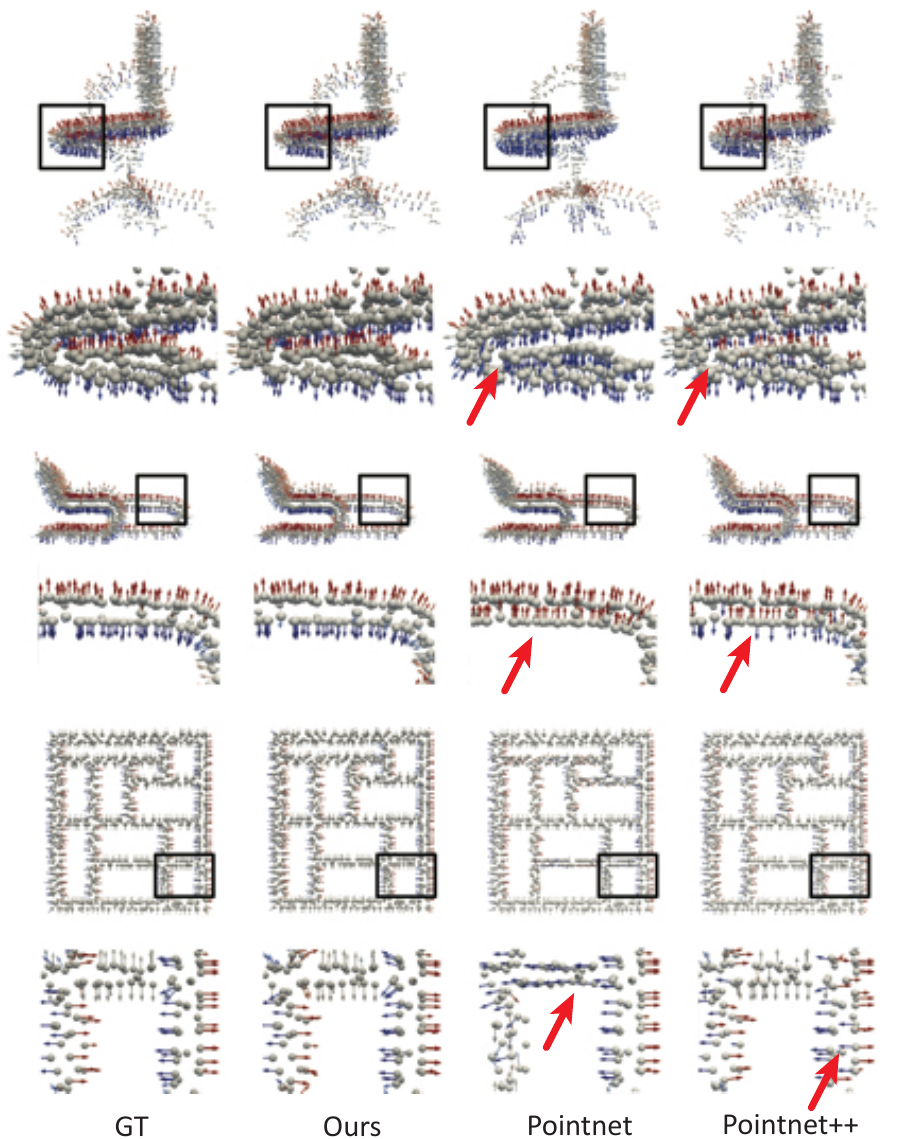}
	\caption {Normal estimation in ModelNet40. We show normal estimation of four models (rows) with blow-ups. Normals are colored by one of their coordinates for better visualization. Note that competing methods sometimes fail to recognize the outward normal direction (examples indicated by red arrows).}\label{fig:normals}
\end{figure}


\subsection{Training details, timings and network size}

We implemented our method using the TensorFlow library \cite{tensorflow2015-whitepaper} in Python. We used the Adam optimization method with learning rate 0.001 and decay rate 0.7. The models were trained on Nvidia p100 GPUs. Table \ref{tab:time} summarizes running times and network sizes.
%
%
Our smaller classification network achieves state of the art result (see Table \ref{tab:classifictionResOurVariation}, previous to last row) and has only 1.4M parameters with a total model size of 17 MB.


\begin{table}
	\centering
	\begin{tabular}{@{\hspace{1.5pt}}l@{\hspace{1.5pt}}r@{\hspace{1.5pt}}r@{\hspace{1.5pt}}r@{\hspace{1.5pt}}r@{\hspace{1.5pt}}r@{\hspace{1.5pt}}}
		\toprule
		& \#Param.  & \begin{tabular}{@{\hspace{1.5pt}}r@{\hspace{1.5pt}}}
			Size \\ (mb)
		\end{tabular}  &
		\begin{tabular}{@{\hspace{1.5pt}}r@{\hspace{1.5pt}}}
			Converge \\ (epochs)
		\end{tabular} &
		\begin{tabular}{@{\hspace{1.5pt}}r@{\hspace{1.5pt}}}
			Training \\ (min)
		\end{tabular} &
		\begin{tabular}{@{\hspace{1.5pt}}r@{\hspace{1.5pt}}}
			Forward \\ (msec)
		\end{tabular} \\
		\midrule
		\textbf{Classification}\\
		\midrule
		Large & 8.1M & 98 & 250 &  6 & 80 \\
		Small & 1.4M & 17 & 250 &  5 & 70 \\
		\midrule
		\textbf{Segmentation}\\
		\midrule
		& 5.4M  & 62  &  85 & 37 & 200 \\		
		\bottomrule
	\end{tabular}%
	\caption{Timing and network size. Training time is measured in minuets per epoch.}
	\label{tab:time}%
\end{table}%

\section{Conclusions}
This paper describes PCNN: a methodology for defining convolution of functions over point clouds that is efficient, invariant to point cloud order, robust to point sampling and density, and posses translation invariance. The key idea is to translate volumetric convolution to arbitrary point clouds using extension and restriction operators.

Testing PCNN on standard point cloud benchmarks show state of the art results using compact networks. The main limitation of our framework compared to image CNN is the extra computational burden due to the computation of farthest point samples in the network and the need to compute the ``translating'' tensor $q\in \Real^{I\times I \times L}$ which is a function of the point cloud $X$. Still, we believe that a sparse efficient implementation can alleviate this limitation and mark it as a future work. Other venues for future work is to learn kernel translations per channel similarly to \cite{dai2017deformable}, and apply the method to data of higher dimension than $d=3$ which seems to be readily possible. Lastly, we would like to test this framework on different problems and architectures.

\section{Acknowledgements}
This research was supported in part by the European Research Council (ERC Consolidator Grant, "LiftMatch" 771136), the Israel Science Foundation (Grant No. 1830/17). We would like thank the authors of PointNet \cite{qi2016pointnet} for sharing their code and data.

\bibliographystyle{abbrv}
\bibliography{paper1}

\appendix


\section{Proofs}\label{app:proofs}

\subsection{Multiplication law for Gaussians}
\begin{proposition}
	Let
	$$\Phi_{\mu,\sigma}(x)=\exp(-\frac{\|x-\mu\|^2}{2\sigma^2})$$
	and
	$$B(\sigma)=\frac{1}{(2\pi \sigma^2)^\frac{3}{2}}$$
	then
	$$ \Phi_{\mu_1,\sigma_1}*\Phi_{\mu_2,\sigma_2} = C(\sigma_1,\sigma_2)\cdot\Phi_{\mu,\sigma}$$
	where $\mu=\mu_1+\mu_2$, $\sigma=\sqrt{\sigma_1^2+\sigma_2^2}$ and $C(\sigma_1,\sigma_2)=\frac{B(\sigma_1)B(\sigma_2)}{B(\sigma)}$
\end{proposition}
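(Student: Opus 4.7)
The plan is to evaluate the convolution integral by a standard translation-plus-completing-the-square argument. First, applying the substitution $y \mapsto y + \mu_1$ in
$$(\Phi_{\mu_1,\sigma_1} * \Phi_{\mu_2,\sigma_2})(x) = \int_{\Real^3} \Phi_{\mu_1,\sigma_1}(y)\, \Phi_{\mu_2,\sigma_2}(x - y)\, dy,$$
reduces the problem to proving
$$(\Phi_{0,\sigma_1} * \Phi_{0,\sigma_2})(z) = C(\sigma_1,\sigma_2)\, \Phi_{0,\sigma}(z)$$
for $z = x - \mu_1 - \mu_2$. Once this is established, re-inserting $z = x - \mu$ immediately gives the translated version, so without loss of generality we may assume $\mu_1 = \mu_2 = 0$.

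Next, I would expand $\|z - y\|^2 = \|z\|^2 - 2\langle z, y\rangle + \|y\|^2$ and collect the quadratic form in $y$ appearing in the combined exponent, obtaining
$$-\tfrac{a}{2}\|y\|^2 + \tfrac{1}{\sigma_2^2}\langle z, y\rangle - \tfrac{\|z\|^2}{2\sigma_2^2}, \qquad a = \tfrac{1}{\sigma_1^2} + \tfrac{1}{\sigma_2^2}.$$
Completing the square in $y$ rewrites this as $-\tfrac{a}{2}\bigl\|y - z/(a\sigma_2^2)\bigr\|^2 + R(z)$, where the residual $R(z)$ depends only on $z$. Using the algebraic identity $1/a = \sigma_1^2\sigma_2^2/\sigma^2$ with $\sigma^2 = \sigma_1^2 + \sigma_2^2$, a short calculation reduces $R(z)$ to exactly $-\|z\|^2/(2\sigma^2)$, which produces the factor $\Phi_{0,\sigma}(z)$ in the output.

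Finally, I would carry out the remaining Gaussian integral in $y$, which is translation invariant and contributes a prefactor of $(2\pi/a)^{3/2} = (2\pi\sigma_1^2\sigma_2^2/\sigma^2)^{3/2}$. The last task is to verify that this prefactor agrees with the claimed normalization $C(\sigma_1,\sigma_2) = B(\sigma_1)B(\sigma_2)/B(\sigma)$ upon substituting $B(\sigma) = (2\pi\sigma^2)^{-3/2}$; this is pure bookkeeping. The only genuinely delicate step is the algebraic collapse of $R(z)$ to $-\|z\|^2/(2\sigma^2)$, so the main obstacle is simply keeping track of the three different variances and ensuring the constants produced by translation, by completing the square, and by the Gaussian moment all combine consistently.
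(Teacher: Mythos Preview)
Your proof is correct and complete in outline; the translation reduction, completing the square, and evaluation of the Gaussian integral are all standard and the algebra you sketch for collapsing $R(z)$ to $-\|z\|^2/(2\sigma^2)$ goes through exactly as described.

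The paper, however, does not carry out this computation at all. Its proof simply invokes the well-known fact that the convolution of two normalized Gaussian densities is again a normalized Gaussian density, namely
\[
B(\sigma_1)\Phi_{\mu_1,\sigma_1}*B(\sigma_2)\Phi_{\mu_2,\sigma_2}=B(\sigma)\,\Phi_{\mu,\sigma},
\]
and then divides through by $B(\sigma_1)B(\sigma_2)$ using bilinearity of the convolution. So the paper's argument is a one-line citation plus a normalization, whereas yours is a fully self-contained derivation. Your route has the advantage of not depending on an external reference and of making the constants explicit; the paper's route is shorter and highlights that the result is just a rescaling of a standard probability identity.

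One remark on your final ``bookkeeping'' step: when you actually compare your prefactor $(2\pi\sigma_1^2\sigma_2^2/\sigma^2)^{3/2}$ with the stated $C(\sigma_1,\sigma_2)=B(\sigma_1)B(\sigma_2)/B(\sigma)$, you will find they are reciprocals of one another. Your computation in fact yields $C=B(\sigma)/\bigl(B(\sigma_1)B(\sigma_2)\bigr)$, which is what the paper's own one-line argument gives as well; the formula for $C$ in the statement appears to be a typographical inversion. This does not affect the soundness of either proof.
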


\begin{proof}
	It is well known \cite{weisstein2000normal} that the convolution of two normal distributions is again a normal distribution:
	$$ B(\sigma_1)\Phi_{\mu_1,\sigma_1}*B(\sigma_2)\Phi_{\mu_2,\sigma_2} = B(\sqrt{\sigma_1^2+\sigma_2^2})\cdot\Phi_{\mu,\sigma}$$
	The result above follows from the linearity of the convolution.
\end{proof}

\subsection{Theoretical properties of the extension operator}

\begin{proof}[Proof of theorem \ref{thm:extension_op_properties}]
	Let us first show \eqref{e:E_approx_E_S}. Denote $g_x(y)=f(y)\Phi_\sigma(|x-y|)$. By Lemma \ref{lem:equicontinuous} for arbitrary $\epsilon>0$ there exists $\delta>0$ so that $d_S(y,x)<\delta$ implies $\abs{g_x(y)-g_x(x)}<\epsilon$ for all $x\in \Real^3$. Taking $X$ to be $\delta$-net of $S$ we get that
	$$\Big | \E_S[f](x) - \E_X\circ \R_X[f](x) \Big |\leq \epsilon \frac{\sum_i\area(\Omega_i)}{2\pi \sigma^2}  \leq \epsilon \frac{ \area(S)}{2\pi \sigma^2}.$$
	
	To show \eqref{e:E_S_convergence} first let $x\notin S$. Then as $\sigma\too 0$ we have $\max_{y\in S}\Phi_\sigma(|y-x|)\too 0$ and therefore $\E_S[f](x) \too 0$. Next consider $x\in S$. It is enough to show that
	$$\frac{1}{2\pi \sigma^2}\int_S \Phi_\sigma(|x-y|)da(y) \xrightarrow{\sigma\too 0} 1.$$
	Indeed, let $\epsilon>0$. Since $f$ is uniformly continuous, take $\delta>0$ sufficiently small so that $\abs{f(x)-f(x')}<\epsilon$ if $d_S(x,x')<\delta$.  Take $\sigma>0$ sufficiently small so that $$\frac{1}{2\pi \sigma^2}\int_{S \setminus B(x,\delta) }\Phi_\sigma(|x-y|)da(y) \leq \epsilon,$$
	where $B(x,\delta) = \set{y \ \vert \ |y-x|<\delta}$ and
	$$\Big | \frac{1}{2\pi \sigma^2}\int_{S}\Phi_\sigma(|x-y|)da(y) - 1 \Big | \leq \epsilon.$$
	Hence
	$$\Big | \frac{1}{2\pi \sigma^2}\int_{S \cap B(x,\delta) }\Phi_\sigma(|x-y|)da(y) -1 \Big | \leq 2\epsilon.$$
	Therefore,
	\begin{align*}
	\abs{\frac{1}{2\pi \sigma^2}\int_{S}f(y)\Phi_\sigma(|x-y|)da(y) - f(x)} &\leq 3\epsilon (1+|f|_{\infty}).
	\end{align*}	
	Lastly, to show \eqref{e:E_S_mean_curvature} note that
	$$\nabla_x \E_S[\ones](x)= -\frac{1}{2\pi \sigma^4} \int_S (x-y) \Phi(|x-y|) da(y).$$ Using an argument from \cite{belkin2005towards} (see Section 4.2) where we take $\sigma^2=2t$ in their notation and get convergence to $-\frac{1}{2}\Delta_S x$, where $\Delta_S$ is the Laplace-Beltrami operator on surfaces $S$. To finish the proof remember that  \cite{burago2013geometry}
	$$-\frac{1}{2}\Delta_S x = - H\cdot n.$$	
\end{proof}

\begin{lemma}\label{e:delta_approximation}
	Let $S\subset \Real^3$ be a compact smooth surface. Then,
	\begin{equation}
	\lim_{\sigma\too 0} \frac{1}{2\pi \sigma^2}\int_S \Phi_\sigma(|x-y|)da(y) = 1
	\end{equation}
\end{lemma}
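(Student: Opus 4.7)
The intuition is standard: as $\sigma\to 0$, the kernel $\Phi_\sigma$ concentrates at $x\in S$ on a scale $\sigma$, and on that scale the surface $S$ is indistinguishable from its tangent plane $T_xS$. Since $\frac{1}{2\pi\sigma^2}\int_{\mathbb{R}^2}e^{-|u|^2/(2\sigma^2)}\,du = 1$, the limit must be $1$. The job is to convert this intuition into uniform error estimates.

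First, I would fix $x\in S$ and pick a small $\delta>0$, splitting the integral into a \emph{local} piece over $S\cap B(x,\delta)$ and a \emph{tail} piece over $S\setminus B(x,\delta)$. On the tail, $\Phi_\sigma(|x-y|)\leq \exp(-\delta^2/(2\sigma^2))$, so using compactness of $S$ (finite area),
\begin{equation*}
\frac{1}{2\pi\sigma^2}\int_{S\setminus B(x,\delta)}\Phi_\sigma(|x-y|)\,da(y)\;\leq\; \frac{\mathrm{area}(S)}{2\pi\sigma^2}\,e^{-\delta^2/(2\sigma^2)}\;\xrightarrow{\sigma\to 0}\;0,
\end{equation*}
which decays faster than any power of $\sigma$.

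Second, on the local piece I would use a tangent-plane graph parametrization. Since $S$ is a smooth embedded surface, for sufficiently small $\delta$ there is a smooth map $\varphi\colon U\to S\cap B(x,\delta)$ from a neighborhood $U\subset T_xS\cong \mathbb{R}^2$ of the origin with $\varphi(0)=x$ and $d\varphi_0=\mathrm{id}$. Two Taylor expansions are then key: (i) $|\varphi(u)-x|^2 = |u|^2 + O(|u|^4)$, because the normal-direction displacement is $O(|u|^2)$ and squaring yields a quartic correction; and (ii) the area element is $da=J(u)\,du$ with $J(0)=1$ and $J(u)=1+O(|u|)$ smooth.

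Third, to linearize the exponential I would further restrict to an intermediate window $|u|\leq \sigma^{3/4}$. Outside this window but inside $B(x,\delta)$ the Gaussian mass is super-polynomially small, so the contribution is negligible. Inside, the exponent correction $[|\varphi(u)-x|^2-|u|^2]/(2\sigma^2) = O(|u|^4/\sigma^2) = O(\sigma)$ and the Jacobian correction $J(u)-1 = O(\sigma^{3/4})$ are both uniformly $o(1)$. Therefore
\begin{equation*}
\frac{1}{2\pi\sigma^2}\int_{S\cap B(x,\delta)}\Phi_\sigma(|x-y|)\,da(y) \;=\; (1+o(1))\cdot\frac{1}{2\pi\sigma^2}\int_{|u|\leq \sigma^{3/4}}e^{-|u|^2/(2\sigma^2)}\,du,
\end{equation*}
and the right-hand integral, by polar coordinates, equals $1 - e^{-\sigma^{-1/2}/2}\to 1$. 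Combined with the tail estimate, the lemma follows.

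The main obstacle will be choosing the intermediate scale correctly so that the Taylor remainder in the exponent is uniformly controlled. One cannot directly expand $e^{-|\varphi(u)-x|^2/(2\sigma^2)}$ around $e^{-|u|^2/(2\sigma^2)}$ on all of $B(x,\delta)$, because the multiplicative error $\exp(O(|u|^4/\sigma^2))$ blows up once $|u|\gg \sigma^{1/2}$. The remedy is precisely the two-scale decomposition above: the window radius $\sigma^{a}$ with $\tfrac{1}{2}<a<1$ (I chose $a=\tfrac{3}{4}$) is small enough that $|u|^4/\sigma^2\to 0$ yet large enough that the Gaussian tail beyond $|u|=\sigma^{3/4}$ is super-polynomially small. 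Everything else is a routine application of the standard 2D Gaussian normalization.
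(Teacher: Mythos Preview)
Your proof is correct and follows essentially the same route as the paper's: a tangent--plane parametrization combined with a $\sigma$-dependent window (the paper takes a single scale $\delta=\sigma^{1-\tau}$ with $\tau<1/4$, effectively merging your fixed $\delta$ and your intermediate $\sigma^{3/4}$ window into one). The only technical difference is that you control the exponent perturbation $|u|^4/\sigma^2$ multiplicatively, whereas the paper bounds $|\Phi_\sigma(|y(u)|)-\Phi_\sigma(|u|)|$ additively via the Lipschitz estimate $|\Phi_\sigma(r')-\Phi_\sigma(r)|\le (r'-r)/(\sigma e^{1/2})$; both choices work and yield the same conclusion.
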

\begin{proof}
	Denote $T_x S$ the tangent plane to $S$ centered at $x\in S$. Let $y=y(u):T_x S \too S$ be the local parameterization to $S$ over $T_x S$, where $u$ is the local coordinate at $T_x S$.  Since $S$ is smooth and compact we have that $\forall u \in \Upsilon_\delta=T_xS \cap B(x,\delta) $, 
	\begin{align} \label{e:delta_approximation_yu_minus_u}
	& \abs{y(u)-u}=\OO(\delta^2)\\ \label{e:delta_approximation_det_dy_u_minus_1}
	& \abs{\abs{dy(u)}-1} = \OO(\delta),
	\end{align}
	where $|dy(u)|$ is the pulled-back area element of $S$ \cite{do2017differential}.
	We break the error to $\abs{\frac{1}{2\pi\sigma^2}\int_S \Phi_\sigma(|x-y|)da(y) -1  }$
	\begin{align*}
	&\leq \underset{(i)}{\frac{1}{2\pi\sigma^2} \int_{S\setminus y(\Upsilon_\delta)}\Phi_\sigma da} + \underset{(ii)}{\frac{1}{2\pi\sigma^2}\abs{\int_{y(\Upsilon_\delta)} \Phi_\sigma da - \int_{\Upsilon_\delta} \Phi_\sigma du }} \\
	& \underset{(iii)}{\abs{\frac{1}{2\pi \sigma^2} \int_{T_x S} \Phi_\sigma du  - 1}} +
	\underset{(iv)}{\frac{1}{2\pi\sigma^2}\int_{T_x S\setminus \Upsilon_\delta} \Phi_\sigma du}.
	\end{align*}
	First, we note that $(iii)=0$. Now, take $\delta = \sigma^{1-\tau}$, for some fixed $0<\tau<1$, where $\sigma>0$. Then, $(i)=\OO(\sigma)$, $(iv)=\OO(\sigma)$. Lastly $(ii)$
	\begin{align*}
	&\leq \frac{1}{2\pi \sigma^2} \int_{\Upsilon_\delta} \Big |\Phi_\sigma(|y(u)|)-\Phi_\sigma(|u|)\Big ||dy(u)|du \\&  + \frac{1}{2\pi \sigma^2}\int_{\Upsilon_\delta} \Phi_\sigma(|u|)\Big | \abs{dy(u)} -1 \Big |du \\
	& \leq \frac{1}{2\pi \sigma^2} \frac{\max_{u\in \Upsilon_\delta}\Big | |y(u)|-|u| \Big | }{\sigma e^{1/2}}(1+\OO(\delta))\OO(\delta^2) + \OO(\delta) \\
	& = \OO(\sigma^{1-4\tau}),
	\end{align*}
	where we used Lemma \ref{lem:gaussian} in the last inequality. Taking any $\tau<1/4$ proves the result.
\end{proof}

\begin{lemma}\label{lem:equicontinuous}
	Let $f\in C(S,\Real)$, with $S\subset \Real^3$ a compact surface. The family of functions $\set{g_x}_{x\in\Real^3}$ defined by $g_x(y)=f(y)\Phi_\sigma(|x-y|)$, $y\in S$ is uniformly equicontinuous.
\end{lemma}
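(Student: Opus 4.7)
The plan is to decompose $g_x(y)-g_x(y')$ via a product-rule triangle inequality and bound each piece using the uniform continuity of $f$ on $S$ together with a Lipschitz estimate on the Gaussian factor that is uniform in the parameter $x\in\Real^3$.

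First I would use compactness of $S$ to get that $f$ is bounded, with $M:=\sup_{y\in S}|f(y)|<\infty$, and uniformly continuous on $S$ with some modulus $\omega_f$. Next I would compute the gradient
\begin{equation*}
\nabla_z \Phi_\sigma(|x-z|) \;=\; -\frac{(z-x)}{\sigma^2}\,\Phi_\sigma(|x-z|),
\end{equation*}
whose magnitude $(|z-x|/\sigma^2)\,\Phi_\sigma(|x-z|)$ is maximized at $|z-x|=\sigma$ and equals $L:=1/(\sigma\sqrt{e})$. The crucial point is that this bound is independent of $x$, so the family $\{z\mapsto \Phi_\sigma(|x-z|)\}_{x\in\Real^3}$ is uniformly $L$-Lipschitz.

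Then I would apply the triangle inequality in the form
\begin{equation*}
|g_x(y)-g_x(y')| \;\leq\; \Phi_\sigma(|x-y|)\,|f(y)-f(y')| \;+\; |f(y')|\,\bigl|\Phi_\sigma(|x-y|)-\Phi_\sigma(|x-y'|)\bigr|,
\end{equation*}
bounding the Gaussian factor in the first term by $1$ and the difference in the second term by $L|y-y'|$, to obtain $|g_x(y)-g_x(y')| \leq \omega_f(|y-y'|) + ML\,|y-y'|$. Given $\epsilon>0$, I would choose $\delta>0$ small enough that $\omega_f(\delta)<\epsilon/2$ and $ML\delta<\epsilon/2$; this $\delta$ depends only on $f$ and $\sigma$, not on $x$, which is exactly uniform equicontinuity of the family. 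Since $S$ is embedded in $\Real^3$ the intrinsic distance satisfies $|y-y'|\leq d_S(y,y')$, so the same estimate holds with $d_S$ in place of Euclidean distance, as used in the proof of Theorem~\ref{thm:extension_op_properties}.

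I do not foresee a substantive obstacle. The only point requiring care is that the Lipschitz constant for $z\mapsto\Phi_\sigma(|x-z|)$ must be independent of the parameter $x$; this is automatic because this family consists of translates of one fixed Schwartz function, so its Lipschitz constant is a property of $\Phi_\sigma$ alone.
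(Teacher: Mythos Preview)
Your proof is correct and follows essentially the same approach as the paper: the same product-rule triangle inequality split, the same bound $\Phi_\sigma\le 1$ on the first term, and the same Lipschitz constant $1/(\sigma e^{1/2})$ on the Gaussian factor in the second term. The only cosmetic difference is that you obtain this Lipschitz constant directly from the gradient bound on $z\mapsto\Phi_\sigma(|x-z|)$ in $\Real^3$, whereas the paper factors through the reverse triangle inequality $\bigl||x-y|-|x-y'|\bigr|\le|y-y'|$ and then invokes the one-dimensional estimate of Lemma~\ref{lem:gaussian}.
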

\begin{proof} $\abs{g_x(y)-g_x(y')}$
	\begin{align*}
	& \leq \abs{f(y)-f(y')}\Phi_\sigma(|x-y|) + \\
    &\abs{f(y')}\abs{\Phi_\sigma(|x-y|)-\Phi_\sigma(|x-y'|)} \\
	& \leq \abs{f(y)-f(y')} + \abs{f}_\infty \frac{\abs{y-y'}}{\sigma e^{1/2}}
	\end{align*}
	where in the last inequality we used $\abs{\abs{x-y}-\abs{x-y'}} \leq \abs{y-y'}$ and Lemma \ref{lem:gaussian}. Since $f, \abs{\cdot}$ are both uniformly continuous over $S$ (as continuous functions over a compact surface), $f$ is bounded, \ie, $|f|_\infty<\infty$, equicontinuity of $\set{g_x}$ is proved.
\end{proof}

\begin{lemma}\label{lem:gaussian}
	The gaussian satisfies $\abs{\Phi_\sigma(r')-\Phi_\sigma(r)}\leq \frac{(r'-r)}{\sigma e^{1/2}}$ for $0\leq r < r'$.
\end{lemma}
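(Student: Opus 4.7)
The plan is to recognize this as a standard Lipschitz-constant estimate for $\Phi_\sigma$: since $\Phi_\sigma$ is smooth on $[0,\infty)$, the mean value theorem gives $|\Phi_\sigma(r') - \Phi_\sigma(r)| = |\Phi_\sigma'(\xi)|\,(r'-r)$ for some $\xi \in (r,r')$, so it suffices to show that $|\Phi_\sigma'(r)| \leq \frac{1}{\sigma e^{1/2}}$ for all $r \geq 0$.

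First I would compute the derivative directly: from $\Phi_\sigma(r) = \exp(-r^2/(2\sigma^2))$ we get $\Phi_\sigma'(r) = -\frac{r}{\sigma^2}\exp(-r^2/(2\sigma^2))$, hence $|\Phi_\sigma'(r)| = \frac{r}{\sigma^2}\exp(-r^2/(2\sigma^2))$. I would then maximize the single-variable function $h(r) = \frac{r}{\sigma^2}\exp(-r^2/(2\sigma^2))$ on $[0,\infty)$. Setting $h'(r) = \frac{1}{\sigma^2}(1 - r^2/\sigma^2)\exp(-r^2/(2\sigma^2))$ to zero yields the unique interior critical point $r = \sigma$, which is clearly the global maximum since $h(0)=0$ and $h(r)\to 0$ as $r\to\infty$. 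Evaluating gives $h(\sigma) = \frac{1}{\sigma}e^{-1/2} = \frac{1}{\sigma e^{1/2}}$.

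Combining with the mean value theorem yields the claimed bound. There is really no substantive obstacle here; the only thing to be a little careful about is that the endpoint case $r=0$ and monotonicity of $|\Phi_\sigma'|$ on $[0,\sigma]$ versus $[\sigma,\infty)$ both need to be covered, but these are immediate from the critical-point analysis above.
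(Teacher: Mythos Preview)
Your proof is correct and essentially matches the paper's: the paper writes $|\Phi_\sigma(r')-\Phi_\sigma(r)|\leq \int_r^{r'}\frac{t}{\sigma^2}e^{-t^2/(2\sigma^2)}\,dt$ via the fundamental theorem of calculus and then bounds the integrand by $\sigma^{-1}e^{-1/2}$, whereas you use the mean value theorem and bound $|\Phi_\sigma'|$ by the same quantity. The core step---maximizing $r\,e^{-r^2/(2\sigma^2)}$ at $r=\sigma$---is identical.
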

\begin{proof}
	\begin{align*}
	\abs{\Phi_\sigma(r')-\Phi_\sigma(r)}&\leq \int_r^{r'}\frac{t}{\sigma^2}e^{-\frac{t^2}{2\sigma^2}} dt \leq \frac{(r'-r)}{\sigma e^{1/2}},
	\end{align*}
	where in the last inequality we used the fact that $t e^{-\frac{t^2}{2\sigma^2}}\leq \sigma e^{-1/2}$. 
\end{proof}

Lastly, to justify \eqref{e:practical_choice_of_omega} let us use Theorem \ref{thm:extension_op_properties} and consider $f(x)\equiv 1$,
$$1 \approx \E_S[f](x) \approx c \sum_i \omega_i \Phi(|x-x_i|). $$
Plugging $x=x_{i'}$ we get
$$1 \approx c \sum_i \omega_i \Phi(|x_{i'}-x_i|) = c \tilde{\omega}_{i'}\sum_i \Phi(|x_{i'}-x_i|),$$ where $\tilde{\omega}_{i'}$ is an average of values of $\omega_i$ (note that $\Phi(|x_{i'}-x_i|)$ are fast decaying weights away from $x_{i'}$). Hence,  
$$c\tilde{\omega}_{i'}\approx \frac{1}{\sum_i \Phi(|x_{i'}-x_i|)}.$$

\end{document}